\def\eqref#1{equation~\ref{#1}}
\def\1{\bm{1}}
\DeclareMathAlphabet{\mathsfit}{\encodingdefault}{\sfdefault}{m}{sl}
\SetMathAlphabet{\mathsfit}{bold}{\encodingdefault}{\sfdefault}{bx}{n}
\newcommand{\lr}{\eta}
\newcommand{\Cov}{\mathrm{Cov}}
\definecolor{mydarkblue}{rgb}{0,0.08,0.45}
\newcommand{\edata}{\mathcal{S}}
\newcommand{\depth}{L}
\newcommand{\expect}{\mathbb{E}}
\newcommand{\params}{{\boldsymbol \theta}}
\newcommand{\fisher}{\mathbf{F}}
\newcommand{\metric}{\mathbf{C}}
\newcommand{\activation}{\mathbf{a}}
\newcommand{\preactivation}{\mathbf{s}}
\newcommand{\weightMatrix}{\mathbf{W}}
\newcommand{\gaussNewton}{\mathbf{G}}
\newcommand{\nn}{f}
\newcommand{\loss}{\mathcal{L}}
\newcommand{\grad}{\nabla}
\newcommand{\weights}{\mathbf{w}}
\newcommand{\ndata}{N}
\newcommand{\target}{y}
\newcommand{\inputVec}{\mathbf{x}}
\newcommand{\outputs}{\mathbf{z}}
\newcommand{\norm}{\rho}
\newcommand{\pred}{\mathbf{p}}
\newcommand{\decay}{\beta}
\newcommand{\Jacobian}{\mathbf{J}}
\newcommand{\hessian}{\mathbf{H}}
\newcommand{\identity}{\mathbf{I}}
\DeclareMathOperator*{\kvec}{vec}
\newcommand{\kldiv}{\mathrm{D}_{\mathrm{KL}}}
\newcommand{\klbars}{\,\|\,}
\DeclareMathOperator*{\diag}{diag}
\DeclareMathOperator*{\tr}{tr}
\newtheorem{thm}{Theorem}
\newtheorem{lem}{Lemma}
\title{Three Mechanisms of \\ Weight Decay Regularization}
\author{Guodong Zhang, Chaoqi Wang, Bowen Xu, Roger Grosse\\
		University of Toronto, Vector Institute \\
 		\texttt{\{gdzhang, cqwang, bowenxu, rgrosse\}@cs.toronto.edu}\
}
\begin{document}

\maketitle

\begin{abstract}
Weight decay is one of the standard tricks in the neural network toolbox, but the reasons for its regularization effect are poorly understood, and recent results have cast doubt on the traditional interpretation in terms of $L_2$ regularization.
Literal weight decay has been shown to outperform $L_2$ regularization for optimizers for which they differ. 
We empirically investigate weight decay for three optimization algorithms (SGD, Adam, and K-FAC) and a variety of network architectures.
We identify three distinct mechanisms by which weight decay exerts a regularization effect, depending on the particular optimization algorithm and architecture: (1) increasing the effective learning rate, (2) approximately regularizing the input-output Jacobian norm, and (3) reducing the effective damping coefficient for second-order optimization. 
Our results provide insight into how to improve the regularization of neural networks.
\end{abstract}

\section{Introduction}

Weight decay has long been a standard trick to improve the generalization performance of neural networks~\citep{krogh1992simple, bos1996using} by encouraging the weights to be small in magnitude.
It is widely interpreted as a form of $L_2$ regularization because it can be derived from the gradient of the $L_2$ norm of the weights in the gradient descent setting.
However, several findings cast doubt on this interpretation:
\begin{itemize}[leftmargin=0.4in]
\item Weight decay has sometimes been observed to improve training accuracy, not just generalization performance (e.g.~\citet{krizhevsky2012imagenet}).
\item \citet{loshchilov2017fixing} found that when using Adam~\citep{kingma2014adam} as the optimizer, literally applying weight decay (i.e.~scaling the weights by a factor less than 1 in each iteration) enabled far better generalization than adding an $L_2$ regularizer to the training objective. 
\item Weight decay is widely used in networks with Batch Normalization (BN)~\citep{ioffe2015batch}. In principle, weight decay regularization should have no effect in this case, since one can scale the weights by a small factor without changing the network's predictions. Hence, it does not meaningfully constrain the network's capacity.
\end{itemize}
The effect of weight decay remains poorly understood, and we lack clear guidelines for which tasks and architectures it is likely to help or hurt. A better understanding of the role of weight decay would help us design more efficient and robust neural network architectures.

In order to better understand the effect of weight decay, we experimented with both weight decay and $L_2$ regularization applied to image classifiers using three different optimization algorithms: SGD, Adam, and Kronecker-Factored Approximate Curvature (K-FAC)~\citep{martens2015optimizing}. 
Consistent with the observations of~\citet{loshchilov2017fixing}, we found that weight decay consistently outperformed $L_2$ regularization in cases where they differ. Weight decay gave an especially strong performance boost to the K-FAC optimizer, and closed most of the generalization gaps between first- and second-order optimizers, as well as between small and large batches.
We then investigated the reasons for weight decay's performance boost. Surprisingly, we identified three distinct mechanisms by which weight decay has a regularizing effect, depending on the particular algorithm and architecture:
\begin{enumerate}[leftmargin=0.4in]
	\item In our experiments with first-order optimization methods (SGD and Adam) on networks with BN, we found that it acts by way of the effective learning rate. 
	Specifically, weight decay reduces the scale of the weights, increasing the effective learning rate, thereby increasing the regularization effect of gradient noise~\citep{neelakantan2015adding, keskar2016large}. As evidence, we found that almost all of the regularization effect of weight decay was due to applying it to layers with BN (for which weight decay is meaningless). Furthermore, when we computed the effective learning rate for the network with weight decay, and applied the same effective learning rate to a network without weight decay, this captured the full regularization effect.
	\item We show that when K-FAC is applied to a linear network using the Gauss-Newton metric (K-FAC-G), weight decay is equivalent to regularizing the squared Frobenius norm of the input-output Jacobian (which was shown by~\citet{novak2018sensitivity} to improve generalization). Empirically, we found that even for (nonlinear) classification networks, the Gauss-Newton norm (which K-FAC with weight decay is implicitly regularizing) is highly correlated with the Jacobian norm, and that K-FAC with weight decay significantly reduces the Jacobian norm. 
	\item Because the idealized, undamped version of K-FAC is invariant to affine reparameterizations, the implicit learning rate effect described above should not apply. However, in practice the approximate curvature matrix is damped by adding a multiple of the identity matrix, and this damping is not scale-invariant. We show that without weight decay, the weights grow large, causing the effective damping term to increase. If the effective damping term grows large enough to dominate the curvature term, it effectively turns K-FAC into a first-order optimizer. Weight decay keeps the effective damping term small, enabling K-FAC to retain its second-order properties, and hence improving generalization.
\end{enumerate}
Hence, we have identified three distinct mechanisms by which weight decay improves generalization, depending on the optimization algorithm and network architecture. Our results underscore the subtlety and complexity of neural network training: the final performance numbers obscure a variety of complex interactions between phenomena. While more analysis and experimentation is needed to understand how broadly each of our three mechanisms applies (and to find additional mechanisms!), our work provides a starting point for understanding practical regularization effects in neural network training.

\begin{figure}[t]
	\centering
	\vspace{-0.6cm}
    \includegraphics[width=1.0 \textwidth]{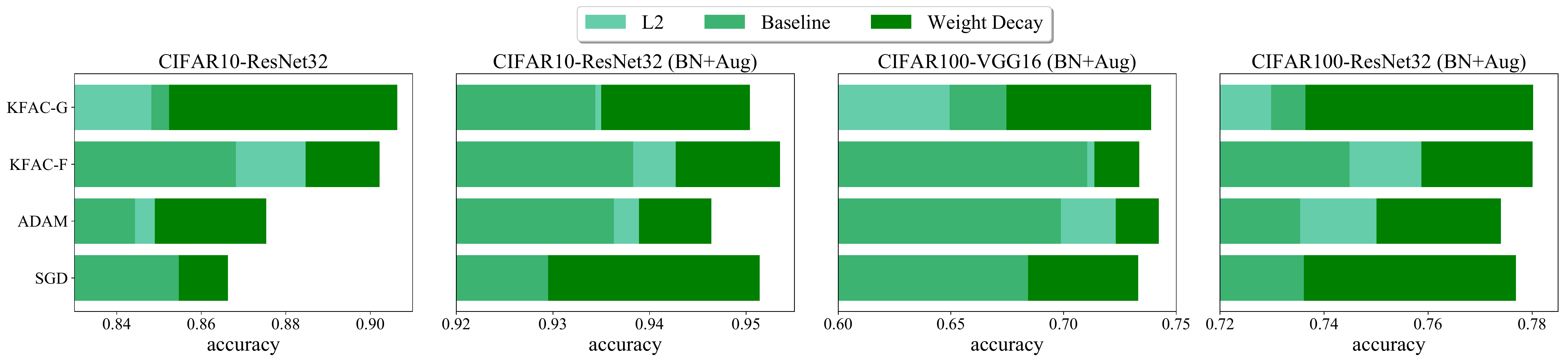}
    \vspace{-0.4cm}
   \caption{Comparison of test accuracy of the networks trained with different optimizers on both CIFAR10 and CIFAR100. We compare {\color[rgb]{0.0 0.26666666666666666 0.15294117647058825} \textbf{Weight Decay}} regularization to {\color[rgb]{0.4 0.8666666666666667 0.6666666666666666}\textbf{$L_2$}} regularization and the {\color[rgb]{ 0.23529411764705882 0.7019607843137254 0.44313725490196076}\textbf{Baseline}} (which used neither). Here, \textbf{BN+Aug} denotes the use of BN and data augmentation. \textbf{K-FAC-G} and \textbf{K-FAC-F} denote K-FAC using Gauss-Newton and Fisher matrices as the preconditioner, respectively. The results suggest that weight decay leads to improved performance across different optimizers and settings.}
    \label{fig:base_l2_wd_reg}
    \vspace{-0.5cm}
\end{figure}
\vspace{-0.1cm}
\section{Preliminaries} 
\vspace{-0.1cm}
\label{sec:preliminary}

\textbf{Supervised learning.} 
Given a training set $\edata$ consisting of training pairs $\{ \inputVec, \target\}$, and a neural network $\nn_\params(\inputVec)$ with parameters $\params$ (including weights and biases), our goal is to minimize the emprical risk expressed as an average of a loss $\ell$ over the training set:
$\loss(\params) \equiv \frac{1}{\ndata} \sum_{(\inputVec, \target) \sim \edata} \ell \left(\target, \nn_\params(\inputVec)\right)$.

\textbf{Stochastic Gradient Descent.} To minimize the empirical risk $\loss(\params)$, stochastic gradient descent (SGD) is used extensively in deep learning community.
Typically, gradient descent methods can be derived from the framework of steepest descent with respect to standard Euclidean metric in parameter space. Specifically, gradient descent minimizes the following surrogate objective in each iteration:
\begin{equation}
	\label{eq:prox-obj}
	h(\params) = \Delta \params^\top \grad_\params \loss(\params) + 1/\lr \mathrm{D}(\params, \params+\Delta\params)\text{,}
\end{equation}
where the distance (or dissimilarity) function $\mathrm{D}(\params, \params+\Delta\params)$ is chosen as $\frac{1}{2}\|\Delta \params\|_2^2$.
In this case, solving~\eqref{eq:prox-obj} yields $\Delta\params = - \lr \grad_\params \loss(\params)$, where $\lr$ is the learning rate.

\textbf{Natural gradient.} 
Though popular, gradient descent methods often struggle to navigate ``valleys'' in the loss surface with ill-conditioned curvature~\citep{martens2010deep}.
Natural gradient descent, as a variant of second-order methods~\citep{martens2014new}, is able to make more progress per iteration by taking into account the curvature information.
One way to motivate natural gradient descent is to show that it can be derived by adapting steepest descent formulation, much like gradient descnet, except using an alternative local distance.
The distance function which leads to natural gradient is the KL divergence on the model's predictive distribution $\kldiv(p_{\params} \klbars p_{\params + \Delta \params}) \approx \frac{1}{2} \Delta\params^\top \fisher \Delta\params$, where $\fisher(\params)$ is the Fisher information matrix\footnote{The underlying distribution for the expectation in~\eqref{eq:fisher} has been left ambiguous. Throughout the experiments, we sample the targets from the model's predictions, as done in~\citet{martens2015optimizing}}~\citep{amari1998natural}:
\begin{equation}
	\label{eq:fisher}
	\fisher = \expect \left[ \grad_{\params}\log p(\target | \inputVec, \params) \grad_{\params}\log p(\target | \inputVec, \params)^\top \right]\text{.}
\end{equation}
Applying this distance function to~\eqref{eq:prox-obj}, we have $\params^{t+1} \leftarrow \params^{t} - \lr \fisher^{-1}\grad_\params \loss(\params)$.

\textbf{Gauss-Newton algorithm.}
Another sensible distance function in~\eqref{eq:prox-obj} is the $L_2$ distance on the output (logits) of the neural network, i.e. $\frac{1}{2}\|\nn_{\params + \Delta \params}-\nn_\params\|_2^2$.
This leads to the \emph{classical} Gauss-Newton algorithm which updates the parameters by $\params^{t+1} \leftarrow \params^{t} - \lr \gaussNewton^{-1}\grad_\params \loss(\params)$, where the Gauss-Newton (GN) matrix is defined as
\begin{equation}
	\gaussNewton = \expect \left[\Jacobian_\params^\top \Jacobian_\params \right]\text{,}
\end{equation}
and $\Jacobian_\params$ is the Jacobian of $\nn_\params(\inputVec)$ w.r.t $\params$. The Gauss-Newton algorithm, much like natural gradient descent, is also invariant to the specific parameterization of neural network function $\nn_\params$. 

\textbf{Two curvature matrices.}
It has been shown that the GN matrix is equivalent to the Fisher matrix in the case of regression task with squared error loss~\citep{heskes2000natural}.
However, they are not identical for the case of classification, where cross-entropy loss is commonly used.
Nevertheless, \citet{martens2014new} showed that the Fisher matrix is equivalent to \emph{generalized} GN matrix when model prediction $p(\target | \inputVec, \params)$ corresponds to exponential family model with natural parameters given by $\nn_\params(\inputVec)$, where the \emph{generalized} GN matrix is given by
\begin{equation}
	\gaussNewton = \expect \left[\Jacobian_\params^\top \hessian_\ell \Jacobian_\params \right]\text{,}
\end{equation}
and $\hessian_\ell$ is the Hessian of $\ell(\target, z)$ w.r.t $z$, evaluated at $z = \nn_\params(\inputVec)$. In regression with squared error loss, the Hessian $\hessian_\ell$ happens to be identity matrix.

\textbf{Preconditioned gradient descent.} Given the fact that both natural gradient descent and Gauss-Newton algorithm precondition the gradient with an extra curvature matrix $\metric(\params)$ (including the Fisher matrix and GN matrix), we also term them \emph{preconditioned gradient descent} for convenience.

\textbf{K-FAC.}
 As modern neural networks may contain millions of parameters, computing and storing the exact curvature matrix and its inverse is impractical.
Kronecker-factored approximate curvature (K-FAC) \cite{martens2015optimizing} uses a Kronecker-factored approximation to the curvature matrix to perform efficient approximate natural gradient updates.
As shown by~\citet{luk2018coordinate}, K-FAC can be applied to general pullback metric, including Fisher metric and the Gauss-Newton metric. 
For more details, we refer reader to Appendix~\ref{app:kfac} or~\citet{martens2015optimizing}.

\textbf{Batch Normalization.} Beyond sophisticated optimization algorithms, Batch Normalization (BN) plays an indispensable role in modern neural networks. Broadly speaking, BN is a mechanism that aims to stabilize the distribution (over a mini-batch) of inputs to a given network layer during training. 
This is achieved by augmenting the network with additional layers that subtract the mean $\mu$ and divide by the standard deviation $\sigma$. 
Typically, the normalized inputs are also scaled and shifted based on trainable parameters $\gamma$ and $\beta$:
\begin{equation}
	\text{BN}(\inputVec) = \frac{\inputVec - \mu}{\sigma} \cdot \gamma + \beta\text{.}
\end{equation}
For clarity, we ignore the parameters $\gamma$ and $\beta$, which do not impact the performance in practice.
We further note that BN is applied before the activation function and not used in the output layer.

\section{The Effectiveness of Weight Decay }
\label{sec-3}

\begin{table}[t]
\small
\vspace{-0.6cm}
\caption{Classification results on CIFAR-10 and CIFAR-100. \textbf{B} denotes BN while \textbf{D} denotes data augmentation, including horizontal flip and random crop. \textbf{WD} denotes weight decay regularization. Weight decay regularization improves the generalization consistently. Interestingly, we observe that weight decay gives an especially strong performance boost to the K-FAC optimizer when BN is turned off.}
\vspace{-0.3cm}
\label{tab:classification}
\begin{center}
\begin{tabular}{l|c|c c|c|c|c|c|c|c|c|c}
\toprule
\multirow{2}{*}{Dataset} & \multirow{2}{*}{Network} & \multirow{2}{*}{B} & \multirow{2}{*}{D} & \multicolumn{2}{c}{SGD} & \multicolumn{2}{c}{ADAM} & \multicolumn{2}{c}{K-FAC-F} & \multicolumn{2}{c}{K-FAC-G} \\ 
\cline{5-12}
& & & & & WD & & WD & & WD & & WD \\
\midrule	
\multirow{3}{*}{CIFAR-10} & \multirow{3}{*}{VGG16}    & &                         & 83.20 & 84.87 & 83.16 & 84.12 & 85.58 & 89.60 & 83.85 & \textbf{89.81} \\
                                                      & & \checkmark &            & 86.99 & 88.85 & 88.45 & 88.72 & 87.97 & 89.02 & 88.17 & \textbf{89.77 }\\
                                                      & & \checkmark & \checkmark & 91.71 & 93.39 & 92.89 & 93.62 & 93.12 & \textbf{93.90} & 93.19 & 93.80 \\
\hline       

\multirow{3}{*}{CIFAR-10} & \multirow{3}{*}{ResNet32} & &                         & 85.47 & 86.63 & 84.43 & 87.54 & 86.82 & 90.22 & 85.24 & \textbf{90.64} \\
                                                      & & \checkmark &            & 86.13 & 90.65 & 89.46 & 90.61 & 89.78 & \textbf{91.24} & 89.94 & 90.91 \\
                                                      & & \checkmark & \checkmark & 92.95 & 95.14 & 93.63 & 94.66 & 93.80 & \textbf{95.35} & 93.44 & 95.04 \\
\hline       
CIFAR-100 & VGG16 & \checkmark & \checkmark & 68.42 & 73.31 & 69.88 & \textbf{74.22} & 71.05 & 73.36 & 67.46 & 73.57  \\
\hline       
CIFAR-100 & ResNet32 & \checkmark & \checkmark & 73.61 & 77.73 & 73.60 & 77.40 & 74.49 & 78.01 & 73.70 & \textbf{78.02} \\                    
\bottomrule
\end{tabular}
\end{center}
\vspace{-0.3cm}
\end{table}

Our goal is to understand weight decay regularization in the context of training deep neural networks. 
Towards this, we first discuss the relationship between $L_2$ regularization and weight decay in different optimizers.

 Gradient descent with weight decay is defined by the following update rule:
$\params^{t+1} \leftarrow (1 - \lr \decay) \params^{t} - \lr \grad \loss(\params^{t})$,
 where $\decay$ defines the rate of the weight decay per step and $\lr$ is the learning rate. 
 In this case, weight decay is equivalent to $L_2$ regularization.
 However, the two differ when the gradient update is preconditioned by a matrix $\metric^{-1}$, as in Adam or K-FAC.
 The preconditioned gradient descent update with $L_2$ regularization is given by
 \begin{equation}
 	\label{eq:ng-l2}
 	\params^{t+1} \leftarrow (\identity - \lr \decay \metric^{-1})\params^{t} - \lr \metric^{-1}\grad_\params \loss(\params^{t})\text{,}
 \end{equation}
 whereas the weight decay update is given by
 \begin{equation}
 	\label{eq:ng-wd}
 	\params^{t+1} \leftarrow (1 - \lr\decay)\params^{t} - \lr \metric^{-1}\grad_\params \loss(\params^{t})\text{.}
 \end{equation}
The difference between these updates is whether the preconditioner is applied to $\params^t$. The latter update can be interpreted as the preconditioned gradient descent update on a regularized objective where the regularizer is the squared $\metric$-norm $\|\params\|^2_{\metric} = \params^\top \metric \params$. If $\metric$ is adapted based on statistics collected during training, as in Adam or K-FAC, this interpretation holds only approximately because gradient descent on $\|\params\|^2_{\metric}$ would require differentiating through $\metric$. However, this approximate regularization term can still yield insight into the behavior of weight decay. (As we discuss later, this observation informs some, but not all, of the empirical phenomena we have observed.)
Though the difference between the two updates may appear subtle, we find that it makes a substantial difference in terms of generalization performance. 

\paragraph{Initial Experiments.} 
We now present some empirical findings about the effectiveness of weight decay which the rest of the paper is devoted to explaining. 
Our experiments were carried out on two different datasets: CIFAR-10 and CIFAR-100~\citep{krizhevsky2009learning} with varied batch sizes.
We test VGG16~\citep{simonyan2014very} and ResNet32~\citep{he2016deep} on both CIFAR-10 and CIFAR-100 (for more details, see Appendix~\ref{app:experiment-details}).
In particular, we investigate three different optimization algorithms: SGD, Adam and K-FAC.
We consider two versions of K-FAC, which use the Gauss-Newton matrix (K-FAC-G) and Fisher information matrix (K-FAC-F).

Figure~\ref{fig:base_l2_wd_reg} shows the comparison between weight decay, $L_2$ regularization and the baseline.
We also compare weight decay to the baseline on more settings and report the final test accuracies in Table \ref{tab:classification}. 
Finally, the results for large-batch training are summarized in Table \ref{tab:classification-large-batch-sgd-kfac}. 
Based on these results, we make the following observations regarding weight decay:
\begin{enumerate}[leftmargin=0.4in]
	\item In all experiments, weight decay regularization consistently improved the performance and was more effective than $L_2$ regularization in cases where they differ (See Figure~\ref{fig:base_l2_wd_reg}).
	\item Weight decay closed most of the generalization gaps between first- and second-order optimizers, as well as between small and large batches (See Table~\ref{tab:classification} and Table~\ref{tab:classification-large-batch-sgd-kfac}).
	\item Weight decay significantly improved performance even for BN networks (See Table~\ref{tab:classification}), where it does not meaningfully constrain the networks' capacity.
	\item Finally, we notice that weight decay gave an especially strong performance boost to the K-FAC optimizer when BN was disabled (see the first and fourth rows in Table~\ref{tab:classification}).
\end{enumerate}

In the following section, we seek to explain these phenomena. With further testing, we find that weight decay can work in unexpected ways, especially in the presence of BN.

\section{Three Mechanisms of Weight Decay Regularization}
\label{sec:mechanism}

\subsection{Mechanism I: Higher Effective Learning Rate}
\label{sec-m1}

As discussed in Section~\ref{sec-3}, when SGD is used as the optimizer, weight decay can be interpreted as penalizing the $L_2$ norm of the weights. Classically, this was believed to constrain the model by penalizing explanations with large weight norm. However, for a network with Batch Normalization (BN), an $L_2$ penalty does not meaningfully constrain the reprsentation, because the network's predictions are invariant to rescaling of the weights and biases. More precisely, if $\text{BN}(\inputVec; \params_l)$ denotes the output of a layer with parameters $\params_l$ in which BN is applied before the activation function, then 
\begin{equation}
	\text{BN}(\inputVec; \alpha \params_l) = \text{BN}(\inputVec; \params_l)\text{,}
\end{equation}
for any $\alpha > 0$. By choosing small $\alpha$, one can make the $L_2$ norm arbitrarily small without changing the function computed by the network. Hence, in principle, adding weight decay to layers with BN should have no effect on the optimal solution. But empirically, weight decay appears to significantly improve generalization for BN networks (e.g.~see Figure~\ref{fig:base_l2_wd_reg}). 

\citet{van2017l2} observed that weight decay, by reducing the norm of the weights, increases the effective learning rate. Since higher learning rates lead to larger gradient noise, which has been shown to act as a stochastic regularizer~\citep{neelakantan2015adding, keskar2016large}, this means weight decay can indirectly exert a regularizing effect through the effective learning rate. In this section, we provide additional evidence supporting the hypothesis of \citet{van2017l2}. For simplicity, this section focuses on SGD, but we've observed similar behavior when Adam is used as the optimizer.

\begin{figure}[t]
	\vspace{-0.6cm}	
    \centering
    \includegraphics[height=1.8in]{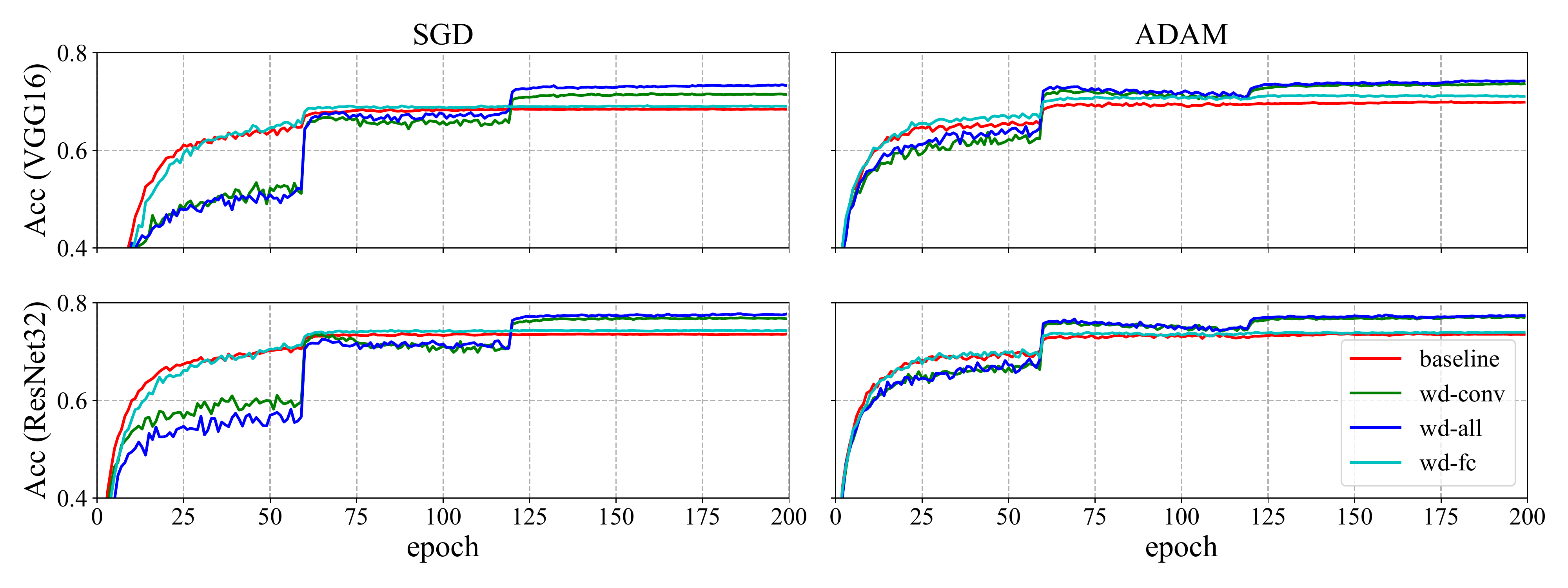}
    \vspace{-0.25cm}	
    \caption{Test accuracy as a function of training epoch for SGD and Adam on CIFAR-100 with different weight decay regularization schemes. {\color[rgb]{0.9,0,0} baseline} is the model without weight decay; {\color[rgb]{0,0.5,0} wd-conv} is the model with weight decay applied to all convolutional layers; {\color[rgb]{0,0,0.8} wd-all} is the model with weight decay applied to all layers; {\color{cyan} wd-fc} is the model with weight decay applied to the last layer (fc). Most of the generalization effect of weight decay is due to applying it to layers with BN.}
    \label{fig:first-order-reg}	
    \vspace{-0.4cm}
\end{figure}
\begin{wrapfigure}[13]{L}{0.38\textwidth}
	\vspace{-0.6cm}
    \centering
    \includegraphics[width=2.0in]{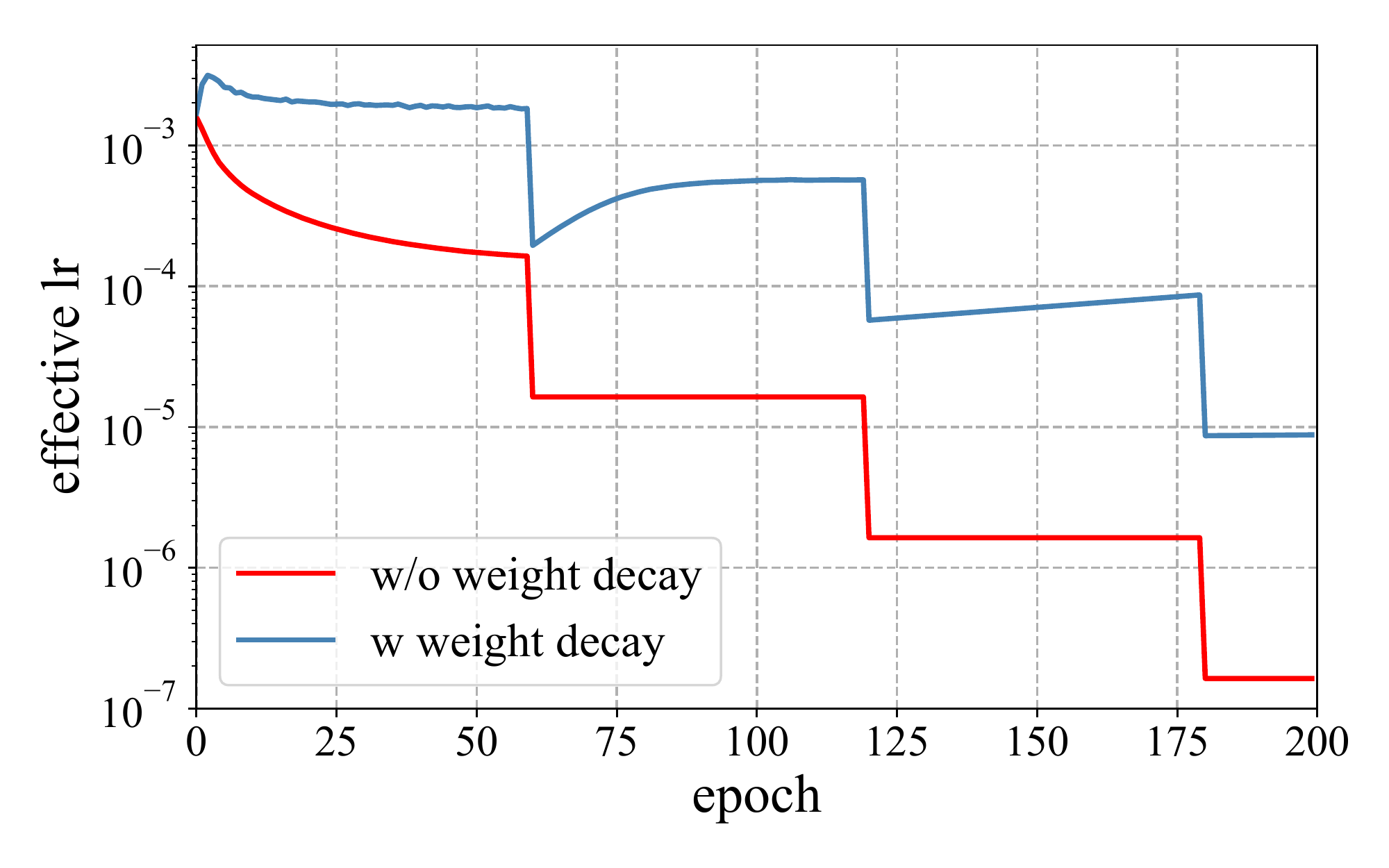}
    \vspace{-0.3cm}
    \small
    \caption{Effective learning rate of the first layer of ResNet32 trained with SGD on CIFAR-100. Without weight decay regularization, the effective learning rate decreases quickly in the beginning.}
    \label{fig:effective-lr}
    \vspace{-0.3cm}
\end{wrapfigure}
Due to its invariance to the scaling of the weights, the key property of the weight vector is its direction. 
As shown by~\citet{hoffer2018norm}, the weight direction $\hat{\params_l} = \params_l / \|\params_l\|_2$ is updated according to
\begin{equation}
	\label{eq:sgd-bn}
	\hat{\params}_l^{t+1} \leftarrow \hat{\params}_l^{t} - \lr \|\params_l^t\|_2^{-2} (\identity - \hat{\params}_l^t \hat{\params}_l^{t^\top} )\grad_{\params_l}\loss(\hat{\params}^t) + O(\lr^2)\text{.}
\end{equation}
Therefore, the effective learning rate is approximately proportional to $\lr/\|\params_l\|_2^2$. 
Which means that by decreasing the scale of the weights, weight decay regularization increases the effective learning rate.

Figure \ref{fig:effective-lr} shows the effective learning rate over time for two BN networks trained with SGD (the results for Adam are similar), one with weight decay and one without it. Each network is trained with a typical learning rate decay schedule, including 3 factor-of-10 reductions in the learning rate parameter, spaced 60 epochs apart. Without weight decay, the normalization effects cause an additional effective learning rate decay (due to the increase of weight norm), which reduces the effective learning rate by a factor of 10 over the first 50 epochs. By contrast, when weight decay is applied, the effective learning rate remains more or less constant in each stage.

\begin{wrapfigure}[12]{R}{0.4\textwidth}
	\vspace{-0.3cm}
    \centering
    \includegraphics[width=2.2in]{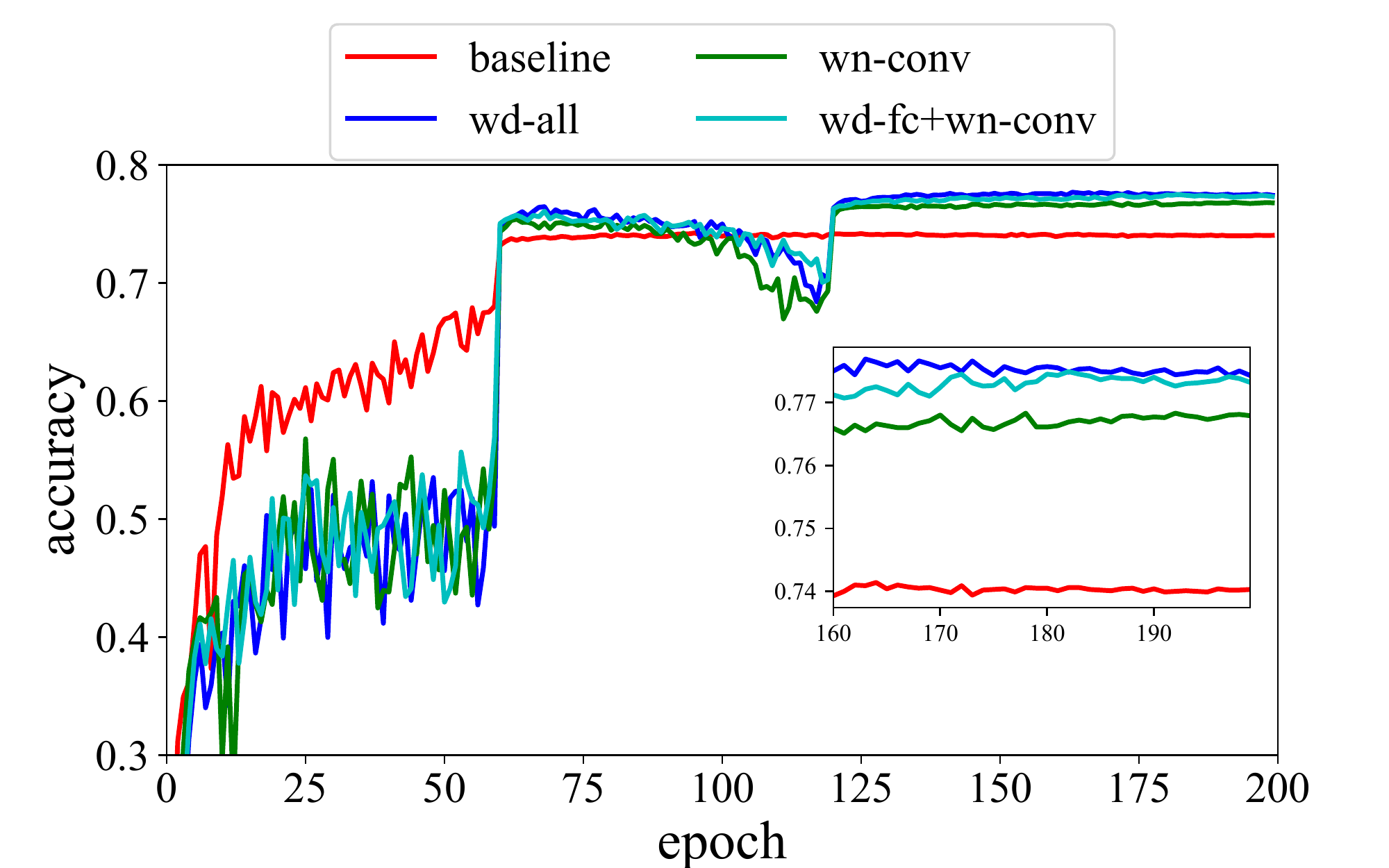}
    \vspace{-0.5cm}
    \caption{The curves of test accuracies of ResNet32 on CIFAR-100. To be noted, we use \textbf{wd} and \textbf{wn} to denote weight decay and weight normalization  respectively.}
    \label{fig:norm-correction}
    \vspace{-0.3cm}
\end{wrapfigure}

We now show that the effective learning rate schedule explains nearly the entire generalization effect of weight decay. First, we independently varied whether weight decay was applied to the top layer of the network, and to the remaining layers. Since all layers except the top one used BN, it's only in the top layer that weight decay would constrain the model. Training curves for SGD and Adam under all four conditions are shown in Figure~\ref{fig:first-order-reg}. In all cases, we observe that whether weight decay was applied to the top (fully connected) layer did not appear to matter; whether it was applied to the reamining (convolution) layers explained most of the generalization effect. This supports the effective learning rate hypothesis.

We further tested this hypothesis using a simple experimental manipulation. Specifically, we trained a BN network without weight decay, but after each epoch, rescaled the weights in each layer to match that layer's norm from the corresponding epoch for the network with weight decay. This rescaling does not affect the network's predictions, and is equivalent to setting the effective learning rate to match the second network. As shown in Figure~\ref{fig:norm-correction}, this effective learning rate transfer scheme ({\color[rgb]{0,0,0.8} wn-conv}) eliminates almost the entire generalization gap; it is fully closed by also adding weight decay to the top layer ({\color{cyan} wd-fc+wn-conv}). Hence, we conclude that for BN networks trained with SGD or Adam, weight decay achieves its regularization effect primarily through the effective learning rate.

\subsection{Mechanism II: Approximate Jacobian Regularization}
\label{sec-m2}

In Section~\ref{sec-3}, we observed that when BN is disabled, weight decay has the strongest regularization effect when K-FAC is used as the optimizer. Hence, in this section we analyze the effect of weight decay for K-FAC with networks without BN. First, we show that in a certain idealized setting, K-FAC with weight decay regularizes the input-output Jacobian of the network. We then empirically investigate whether it behaves similarly for practical networks.

As discussed in Section~\ref{sec-3}, when the gradient updates are preconditioned by a matrix $\metric$, weight decay can be viewed as approximate preconditioned gradient descent on the norm $\|\params\|_{\metric}^2 = \params^\top \metric \params$. This interpretation is only approximate because the exact gradient update requires differentiating through $\metric$.\footnote{We show in Appendix~\ref{app:grad-gn} that this interpretation holds exactly in the case of Gauss-Newton norm.} When $\metric$ is taken to be the (exact) Gauss-Newton (GN) matrix $\gaussNewton$, we obtain the Gauss-Newton norm  $\|\params\|_{\gaussNewton}^2 = \params^\top \gaussNewton(\params) \params$. Similarly, when $\metric$ is taken to be the K-FAC approximation to $\gaussNewton$, we obtain what we term the \emph{K-FAC Gauss-Newton norm}. 

These norms are interesting from a regularization perspective. First, under certain conditions, they are proportional to the average $L_2$ norm of the network's outputs. Hence, the regularizer ought to make the network's predictions less extreme. This is summarized by the following results: 
\begin{lem}[Gradient structure]\label{lem:nn_strucutre}
	For a feed-forward neural network of depth $\depth$ with ReLU activation function and no biases, the network's outputs are related to the input-output Jacobian and parameter-output Jacobian as follows: 
	\begin{equation}
	\begin{aligned}
		\nn_\params(\inputVec) &= \grad_\inputVec \nn_\params(\inputVec)^\top \inputVec = \Jacobian_\inputVec \inputVec \\
		&= \frac{1}{L+1} \grad_\params \nn_\params(\inputVec)^\top \params = \frac{1}{\depth+1} \Jacobian_\params \params \text{.}
	\end{aligned}
	\end{equation}
\end{lem}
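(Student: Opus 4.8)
The plan is to exploit the positive homogeneity that a bias-free ReLU network enjoys in its input and in its parameters, and then invoke Euler's relation for homogeneous functions. Writing the network as $\nn_\params(\inputVec) = \weightMatrix_\depth\, \phi(\weightMatrix_{\depth-1}\, \phi(\cdots \phi(\weightMatrix_0 \inputVec)))$ with $\depth+1$ weight matrices and $\phi$ the elementwise ReLU nonlinearity, the two structural facts I would rely on are: (i) $\phi(\alpha z) = \alpha\, \phi(z)$ for every scalar $\alpha \ge 0$, and (ii) there are no additive biases, so a scalar factor introduced at the input (or at the weights) passes cleanly through each layer rather than being absorbed by a bias term.

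First I would establish the input identity. Scaling the input by $\alpha > 0$ pulls the factor out through the first linear map, and by (i) it then commutes with every subsequent ReLU, so $\nn_\params(\alpha \inputVec) = \alpha\, \nn_\params(\inputVec)$; that is, $\nn_\params$ is homogeneous of degree one in $\inputVec$. Differentiating this identity with respect to $\alpha$ and evaluating at $\alpha = 1$ gives $\Jacobian_\inputVec \inputVec = \nn_\params(\inputVec)$, which is exactly the first line of the claim. Next I would treat the weights jointly: scaling every matrix $\weightMatrix_i$ by the same $\alpha > 0$ contributes one factor of $\alpha$ per linear map, and these factors accumulate across all $\depth+1$ maps (the intervening ReLUs passing each factor through by (i)), so $\nn_{\alpha\params}(\inputVec) = \alpha^{\depth+1}\, \nn_\params(\inputVec)$. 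Hence $\nn_\params$ is homogeneous of degree $\depth+1$ in $\params$, and Euler's relation yields $\Jacobian_\params \params = (\depth+1)\, \nn_\params(\inputVec)$, i.e. $\nn_\params(\inputVec) = \tfrac{1}{\depth+1}\Jacobian_\params \params$, the second line.

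The main obstacle is that ReLU is not differentiable at the origin, so a naive appeal to Euler's theorem is suspect and the Jacobians $\Jacobian_\inputVec$, $\Jacobian_\params$ are only defined almost everywhere. I would circumvent this by working on a fixed activation pattern. On the interior of each linear region the set of active units is constant, so $\nn_\params$ coincides with a fixed product $\weightMatrix_\depth \mathbf{D}_{\depth-1} \weightMatrix_{\depth-1} \cdots \mathbf{D}_0 \weightMatrix_0 \inputVec$, where the $\mathbf{D}_i$ are binary diagonal masks. This product is genuinely linear in $\inputVec$ and multilinear of total degree $\depth+1$ in the $\weightMatrix_i$, so both Jacobians exist there and the two identities follow by direct computation with no smoothness concerns.

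The remaining point to verify is that the scaling directions used for Euler's relation never leave the region. I would check that positive scaling preserves the activation pattern: scaling $\inputVec$ (resp. all $\weightMatrix_i$) by $\alpha > 0$ multiplies each preactivation by a positive factor, leaving every sign and hence every mask $\mathbf{D}_i$ unchanged. Thus for a point in the interior of a region, the restriction of $\nn_\params$ to the ray $\alpha \mapsto \alpha\inputVec$ (resp. $\alpha \mapsto \alpha\params$) stays within that region for $\alpha$ near $1$, so the directional derivative computation underlying Euler's relation is legitimate. This establishes both identities wherever the Jacobians are defined, completing the argument.
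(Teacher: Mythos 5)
Your proof is correct, but it takes a genuinely different route from the paper's. The paper argues layer by layer: writing $\activation_l = \sigma(\outputs_l)$ and using the pointwise ReLU identity $\sigma(\outputs) = \sigma'(\outputs)\,\outputs$, it telescopes $\outputs_{l+1} = \tfrac{\partial \outputs_{l+1}}{\partial \outputs_l^\top}\outputs_l$ through the network to get the input identity, and for the parameters it derives the \emph{per-layer} relation $\outputs_{\depth+1} = \sum_{i,j}\tfrac{\partial \outputs_{\depth+1}}{\partial \weightMatrix_l^{i,j}}\weightMatrix_l^{i,j}$ for each $l$ and then sums over the $\depth+1$ layers. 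You instead observe that the bias-free ReLU network is positively homogeneous of degree $1$ in $\inputVec$ and of degree $\depth+1$ in $\params$ jointly, and invoke Euler's relation once for each identity. The two arguments rest on the same underlying fact ($\sigma(z)=\sigma'(z)z$ \emph{is} degree-one homogeneity of the activation), but your packaging is more conceptual, and you go further than the paper in one respect: you explicitly handle the non-differentiability of ReLU at zero by restricting to the interior of a linear region and checking that positive scaling preserves the activation pattern, a point the paper's proof silently assumes away. Conversely, the paper's explicit computation yields as a byproduct the single-layer identity $\Jacobian_{\params_l}\params_l = \nn_\params(\inputVec)$ for each individual layer, which is precisely what its proof of Lemma~\ref{lem:gauss-newton} needs for the block-diagonal K-FAC norm; your joint-scaling argument delivers only the aggregate over layers as stated, though the per-layer version would follow from your method too by scaling one layer's weights at a time (the network is degree-one homogeneous in each $\weightMatrix_l$ separately).
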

\begin{lem}[K-FAC Gauss-Newton Norm]\label{lem:gauss-newton}
	For a linear feed-forward network of depth $\depth$ without biases~\footnote{For exact Gauss-Newton norm, the result also holds for deep rectified networks (see Appendix~\ref{app:lemma 2}).}, we have:
	\begin{equation}
		\|\params\|_{\gaussNewton_{\mathrm{K-FAC}}}^2 = (L+1)\expect \left[ \|\nn_\params(\inputVec)\|^2  \right]\text{.}
	\end{equation}
\end{lem}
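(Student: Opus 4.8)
The plan is to expand the K-FAC Gauss-Newton norm as a sum of per-layer Kronecker quadratic forms, to observe that for a \emph{linear} network each per-layer block of the K-FAC approximation coincides with the exact Gauss-Newton block, and then to evaluate each term using the degree-one homogeneity of $\nn_\params$ in each individual weight matrix (the per-layer analogue of Lemma~\ref{lem:nn_strucutre}). First I would fix notation: write the $\depth+1$ weight matrices as $\weightMatrix_0,\dots,\weightMatrix_\depth$ with $\nn_\params(\inputVec)=\weightMatrix_\depth\cdots\weightMatrix_0\inputVec$, let $\activation_{l-1}$ be the activation entering layer $l$, and let $\Jacobian_{\preactivation_l}=\partial\nn_\params/\partial\preactivation_l=\weightMatrix_\depth\cdots\weightMatrix_{l+1}$ be the Jacobian of the output with respect to the $l$-th pre-activation. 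The K-FAC block for layer $l$ is $\mA_{l-1}\otimes\gaussNewton_l$ with $\mA_{l-1}=\expect[\activation_{l-1}\activation_{l-1}^\top]$ and $\gaussNewton_l=\expect[\Jacobian_{\preactivation_l}^\top\Jacobian_{\preactivation_l}]$, and the vec identity $\kvec(\weightMatrix)^\top(\mA\otimes\gaussNewton)\kvec(\weightMatrix)=\tr(\weightMatrix^\top\gaussNewton\,\weightMatrix\,\mA)$ gives
\[ \|\params\|^2_{\gaussNewton_{\mathrm{K-FAC}}}=\sum_{l=0}^{\depth}\tr\!\big(\weightMatrix_l^\top\gaussNewton_l\,\weightMatrix_l\,\mA_{l-1}\big). \]

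The crucial observation is that in a linear network $\Jacobian_{\preactivation_l}$ is a product of weight matrices and therefore does not depend on $\inputVec$; hence the expectation defining the exact block, $\expect[(\activation_{l-1}\activation_{l-1}^\top)\otimes(\Jacobian_{\preactivation_l}^\top\Jacobian_{\preactivation_l})]$, factors exactly into $\mA_{l-1}\otimes\gaussNewton_l$, so K-FAC is exact per block. Consequently each summand equals $\expect[\|\Jacobian_{\weightMatrix_l}\kvec(\weightMatrix_l)\|^2]$, where $\Jacobian_{\weightMatrix_l}$ is the output--parameter Jacobian of layer $l$. Because $\nn_\params$ is linear (degree one) in the single matrix $\weightMatrix_l$, Euler's identity gives $\Jacobian_{\weightMatrix_l}\kvec(\weightMatrix_l)=\nn_\params(\inputVec)$, the per-layer version of Lemma~\ref{lem:nn_strucutre}. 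Thus every term equals $\expect[\|\nn_\params(\inputVec)\|^2]$, and summing the $\depth+1$ identical terms yields $(\depth+1)\expect[\|\nn_\params(\inputVec)\|^2]$. Alternatively one can verify this by direct telescoping: substituting $\mA_{l-1}=\weightMatrix_{l-1}\cdots\weightMatrix_0\,\Sigma\,(\weightMatrix_{l-1}\cdots\weightMatrix_0)^\top$ (with $\Sigma=\expect[\inputVec\inputVec^\top]$) and $\gaussNewton_l=(\weightMatrix_\depth\cdots\weightMatrix_{l+1})^\top(\weightMatrix_\depth\cdots\weightMatrix_{l+1})$, the products collapse to the full map $\Theta=\weightMatrix_\depth\cdots\weightMatrix_0$ and each trace reduces to $\tr(\Theta\Sigma\Theta^\top)=\expect[\|\nn_\params(\inputVec)\|^2]$.

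The step I expect to require the most care is justifying that the per-block K-FAC factorization is exact for linear networks -- that is, that the input statistics and the output-sensitivity statistics genuinely decouple -- together with pinning down the Kronecker/vec conventions so that the trace identity comes out with the correct factor ordering. I would also flag explicitly the contrast with the \emph{exact} (non-block-diagonal) Gauss-Newton norm, where Lemma~\ref{lem:nn_strucutre} gives $\|\params\|^2_{\gaussNewton}=\expect[\|\Jacobian_\params\params\|^2]=(\depth+1)^2\expect[\|\nn_\params(\inputVec)\|^2]$: the reduction from $(\depth+1)^2$ to $(\depth+1)$ is precisely the effect of K-FAC's block-diagonal structure discarding the cross-layer terms, and stating this consistently is important so the reader understands why the factor is linear rather than quadratic in the depth.
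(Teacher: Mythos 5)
Your proof is correct and follows essentially the same route as the paper's: use the exactness of the Kronecker factorization for linear networks to identify $\gaussNewton_{\mathrm{K\text{-}FAC}}$ with the block-diagonal of the exact Gauss-Newton matrix, then apply the per-layer identity $\Jacobian_{\params_l}\params_l = \nn_\params(\inputVec)$ (the layerwise step inside Lemma~\ref{lem:nn_strucutre}) to each of the $\depth+1$ blocks. Your added justification of why the input and output-sensitivity statistics decouple, the telescoping sanity check, and the remark contrasting the $(\depth+1)$ factor with the $(\depth+1)^2$ of the exact Gauss-Newton norm are all consistent with (and somewhat more explicit than) the paper's argument.
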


Using these results, we show that for linear networks with whitened inputs, the K-FAC Gauss-Newton norm is proportional to the squared Frobenius norm of the input-output Jacobian. This is interesting from a regularization perspective, since \citet{novak2018sensitivity} found the norm of the input-output Jacobian to be consistently coupled to generalization performance.
\begin{thm}[Approximate Jacobian norm]\label{rem:jacobian-norm}
	For a linear feed-forward network of depth $\depth$ without biases, if we further assume that $\expect[\inputVec] = \mathbf{0}$ and $\Cov(\inputVec) = \mathbf{I}$, then:
\begin{equation}
	\|\params\|_{\gaussNewton_\mathrm{K-FAC}}^2 = (L+1) \|\Jacobian_{\inputVec}\|_{\mathrm{Frob}}^2
\end{equation}
\end{thm}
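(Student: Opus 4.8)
The plan is to reduce the claim to Lemma~\ref{lem:gauss-newton} and then evaluate the resulting expectation using the whitening hypotheses. By Lemma~\ref{lem:gauss-newton}, for a linear feed-forward network of depth $\depth$ without biases we already have
\begin{equation}
    \|\params\|_{\gaussNewton_{\mathrm{K-FAC}}}^2 = (\depth+1)\,\expect\left[\|\nn_\params(\inputVec)\|^2\right]\text{,}
\end{equation}
so it suffices to show that $\expect[\|\nn_\params(\inputVec)\|^2] = \|\Jacobian_{\inputVec}\|_{\mathrm{Frob}}^2$ under the stated assumptions.

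The key observation is that for a linear network $\nn_\params(\inputVec) = \Jacobian_{\inputVec}\inputVec$, where the input-output Jacobian $\Jacobian_{\inputVec}$ is just the product of the layer weight matrices and is therefore \emph{constant}, i.e.\ independent of $\inputVec$. This is exactly what makes the argument exact in the linear case, whereas for a nonlinear network $\Jacobian_{\inputVec}$ would vary with the input. First I would write $\|\nn_\params(\inputVec)\|^2 = \inputVec^\top \Jacobian_{\inputVec}^\top \Jacobian_{\inputVec}\, \inputVec$ and apply the cyclic trace identity $\inputVec^\top M \inputVec = \tr(M\,\inputVec\inputVec^\top)$. Taking expectations and pulling the constant matrix $\Jacobian_{\inputVec}^\top\Jacobian_{\inputVec}$ outside gives $\expect[\|\nn_\params(\inputVec)\|^2] = \tr\!\big(\Jacobian_{\inputVec}^\top\Jacobian_{\inputVec}\,\expect[\inputVec\inputVec^\top]\big)$.

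Next I would invoke the whitening hypotheses: since $\expect[\inputVec]=\mathbf{0}$ and $\Cov(\inputVec)=\mathbf{I}$, the second-moment matrix collapses to $\expect[\inputVec\inputVec^\top] = \Cov(\inputVec) + \expect[\inputVec]\expect[\inputVec]^\top = \mathbf{I}$. Substituting yields $\expect[\|\nn_\params(\inputVec)\|^2] = \tr(\Jacobian_{\inputVec}^\top\Jacobian_{\inputVec}) = \|\Jacobian_{\inputVec}\|_{\mathrm{Frob}}^2$, and multiplying through by $(\depth+1)$ produces the stated identity.

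There is little genuine obstacle once Lemma~\ref{lem:gauss-newton} is in hand; the only point requiring care is recognizing that linearity is essential, since it is precisely the input-independence of $\Jacobian_{\inputVec}$ that allows it to be pulled outside the expectation and identified with the constant Jacobian appearing in the Frobenius norm. The whitening assumption then does the remaining work of reducing the second-moment matrix to the identity, which is what converts the output norm into the bare Frobenius norm of the Jacobian.
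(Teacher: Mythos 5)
Your argument is correct and follows the paper's own proof essentially verbatim: invoke Lemma~\ref{lem:gauss-newton} to reduce to $\expect[\|\nn_\params(\inputVec)\|^2]$, use Lemma~\ref{lem:nn_strucutre} and the cyclic trace identity, pull the input-independent $\Jacobian_\inputVec^\top\Jacobian_\inputVec$ out of the expectation, and apply the whitening assumption $\expect[\inputVec\inputVec^\top]=\mathbf{I}$. No gaps; the one point you flag as requiring care (linearity making $\Jacobian_\inputVec$ constant) is exactly the step the paper also highlights.
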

\begin{proof}
  From Lemma~\ref{lem:gauss-newton}, we have $\|\params\|_{\gaussNewton_\mathrm{K-FAC}}^2 = (L+1)\, \expect \left[ \|\nn_\params(\inputVec)\|^2  \right]$. 
	From Lemma~\ref{lem:nn_strucutre}, 
\[ \expect \left[ \|\nn_\params(\inputVec)\|^2  \right] = \expect \left[\inputVec^\top \Jacobian_\inputVec^\top \Jacobian_\inputVec \inputVec \right] = \expect \left[\tr  \Jacobian_\inputVec^\top \Jacobian_\inputVec \inputVec \inputVec^\top \right]. \]
 When the network is linear, the input-output Jacobian $\Jacobian_{\inputVec}$ is independent of the input $\inputVec$. Then we use the assumption of whitened inputs:
\[ \|\params\|_{\gaussNewton_\mathrm{K-FAC}}^2 = (L+1)\, \expect \left[\tr  \Jacobian_\inputVec^\top \Jacobian_\inputVec \inputVec \inputVec^\top \right] =  (L+1) \tr \Jacobian_\inputVec^\top \Jacobian_\inputVec \expect[ \inputVec \inputVec^\top ] = (L+1) \| \Jacobian_\inputVec \|_{\mathrm{Frob}}^2. \] 
\vspace{-0.4cm}
\end{proof}

\begin{figure}
	\centering
	\vspace{-0.6cm}
	\includegraphics[width=1.0\textwidth]{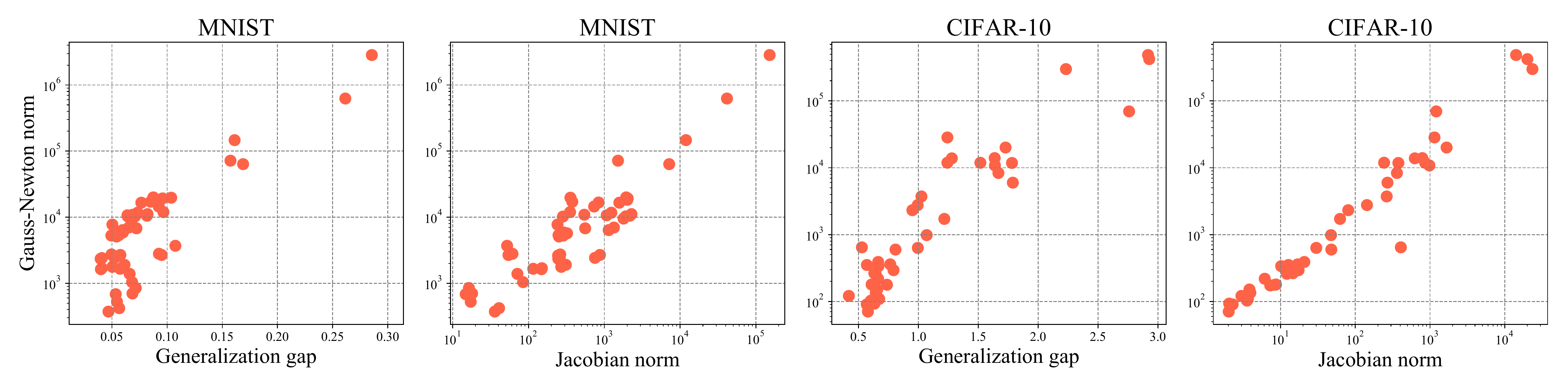}
	\vspace{-0.4cm}
	\caption{Relationship between K-FAC GN norm and Jacobian norm for practical deep neural networks. Each point corresponds to a network trained to $100\%$ training accuracy. Even for (nonlinear) classification networks, the K-FAC GN norm is highly correlated with both the squared Frobenius norm of the input-output Jacobian and the generalization gap. }
    \label{fig:correlation}
    \vspace{-0.3cm}
\end{figure}

While the equivalence between the K-FAC GN norm and the Jacobian norm holds only for linear networks, we note that linear networks have been useful for understanding the dynamics of neural net training more broadly (e.g.~\citet{saxe2013exact}). Hence, Jacobian regularization may help inform our understanding of weight decay in practical (nonlinear) networks. 

To test whether the K-FAC GN norm correlates with the Jacobian norm for practical networks, we trained feed-forward networks with a variety optimizers on both MNIST and CIFAR-10. 
For MNIST, we used simple fully-connected networks with different depth and width. For CIFAR-10, we adopted the VGG family (From VGG11 to VGG19). 
We defined the generalization gap to be the difference between training and test loss. 
Figure~\ref{fig:correlation} shows the relationship of the Jacobian norm to the K-FAC GN norm and to generalization gap for these networks. We observe that the Jacobian norm correlates strongly with the generalization gap (consistent with \citet{novak2018sensitivity}) and also with the K-FAC GN norm. Hence, Remark 1 can inform the regularization of nonlinear networks.

\begin{wraptable}[7]{L}{0.48\textwidth}
\small
\vspace{-0.5cm}
\caption{Squared Frobenius norm of the input-output Jacobian matrix. K-FAC-G with weight decay significantly reduces the Jacobian norm.}
\vspace{-0.4cm}
\begin{center}
\begin{tabular}{|l|c|c|c|c|}
\hline
\multirow{2}{*}{\textbf{Optimizer}} & \multicolumn{2}{c|}{VGG16} & \multicolumn{2}{c|}{ResNet32}  \\ 
\cline{2-5}
& & WD & & WD \\
\hline	
SGD & 564 & 142 & 2765 & 1074 \\
\hline
K-FAC-G & 498 & 51.44 & 2115 & 64.16 \\               
\hline
\end{tabular}
\end{center}
\label{tab:jacobian-reduction}
\end{wraptable}

To test if K-FAC with weight decay reduces the Jacobian norm, we compared the Jacobian norms at the end of training for networks with and without weight decay. As shown in Table~\ref{tab:jacobian-reduction}, weight decay reduced the Jacboian norm by a much larger factor when K-FAC was used as the optimizer than when SGD was used as the optimizer. 

Our discussion so far as focused on the GN version of K-FAC. Recall that, in many cases, the Fisher information matrix differs from the GN matrix only in that it accounts for the output layer Hessian. Hence, this analysis may help inform the behavior of K-FAC-F as well. We also note that $\|\params\|_{\fisher}^2$, the Fisher-Rao norm, has been proposed as a complexity measure for neural networks~\citep{liang2017fisher}. Hence, unlike in the case of SGD and Adam for BN networks, we interpret K-FAC with weight decay as constraining the capacity of the network.

\subsection{Mechanism III: Smaller Effective Damping Parameter}

We now return our attention to the setting of architectures with BN. The Jacobian regularization mechanism from Section~\ref{sec-m2} does not apply in this case, since rescaling the weights results in an equivalent network, and therefore does not affect the input-output Jacobian. Similarly, if the network is trained with K-FAC, then the effective learning rate mechanism from Section~\ref{sec-m1} also does not apply because the K-FAC update is invariant to affine reparameterization~\citep{luk2018coordinate} and therefore not affected by the scaling of the weights. 
More precisely, for a layer with BN, the curvature matrix $\metric$ (either the Fisher matrix or the GN matrix) has the following property: 
\begin{equation}
	\label{eq:curvature-scaling}
	\metric(\params_l) 
        = \frac{1}{\|\params_l\|_2^2} \metric(\hat{\params}_l)\text{,}
\end{equation}
where $\hat{\params_l} = \params_l / \|\params_l\|_2$ as in Section~\ref{sec-m1}. Hence, the $\|\params_l\|_2^2$ factor in the preconditioner counteracts the $\|\params_l\|_2^{-2}$ factor in the effective learning rate, resulting in an equivlaent effective learning rate regardless of the norm of the weights.

\begin{figure}[t]
	\vspace{-0.8cm}	
    \centering
    \includegraphics[height=1.8in]{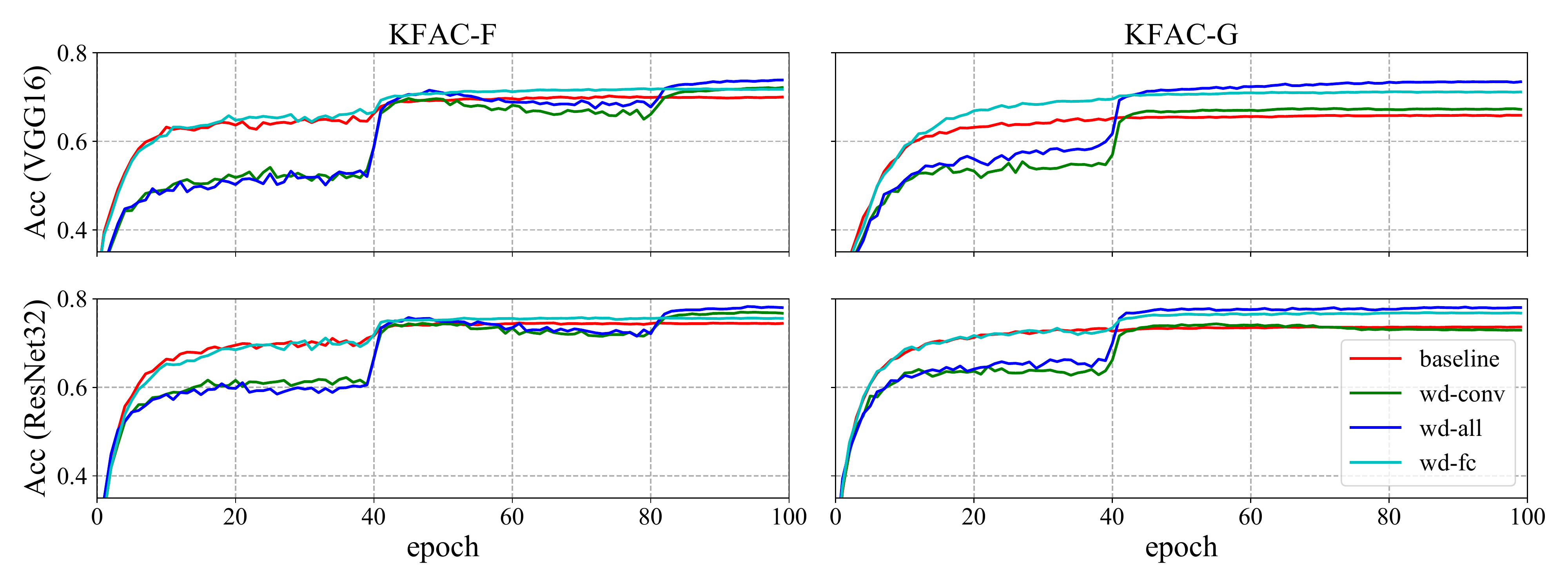}
    \vspace{-0.3cm}	
    \caption{Test accuracy as a function of training epoch for K-FAC on CIFAR-100 with different weight decay regularization schemes. {\color[rgb]{0.9,0,0} baseline} is the model without weight decay regularization; {\color[rgb]{0,0.5,0} wd-conv} is the model with weight decay applied to all convolutional layers; {\color[rgb]{0,0,0.8} wd-all} is the model with weight decay applied to all layers; {\color{cyan} wd-fc} is the model with weight decay applied to the last layer (fc).}
    \label{fig:second-order-reg}
    \vspace{-0.3cm}	
\end{figure}
These observations raise the question of \emph{whether it is still useful to apply weight decay to BN layers when using K-FAC}. 
To answer this question, we repeated the experiments in Figure~\ref{fig:first-order-reg} (applying weight decay to subsets of the layers), but with K-FAC as the optimizer. The results are summarized in Figure~\ref{fig:second-order-reg}. 
Applying it to the non-BN layers had the largest effect, consistent with the Jacobian regularization hypothesis. However, applying weight decay to the BN layers also led to significant gains, especially for K-FAC-F.

The reason this does not contradict the K-FAC invariance property is that practical K-FAC implementations (like many second-order optimizers) dampen the updates by adding a multiple of the identity matrix to the curvature before inversion. According to Equation~\ref{eq:curvature-scaling}, as the norm of the weights gets larger, $\metric$ gets smaller, and hence the damping term comes to dominate the preconditioner. Mathematically, we can understand this effect by deriving the following update rule for the normalized weights $\hat{\params}$ (see Appendix \ref{proof:ng-direction} for proof):
\begin{equation}
	\label{eq:ng-direction}
	\hat{\params}_l^{t+1} \leftarrow \hat{\params}_l^{t} - \lr (\identity - \hat{\params}_l^t \hat{\params}_l^{t^\top} )(\metric(\hat{\params}_l^t) + \|\params_l^t\|_2^2\lambda \identity)^{-1}\grad_{\params_l} \loss(\hat{\params}^{t}) + O(\lr^2)\text{,}
\end{equation}
where $\lambda$ is the damping parameter. Hence, for large $\metric(\hat{\params}_l^t)$ or small $\|\params_l\|$, the update is close to the idealized second-order update, while for small enough $\metric(\hat{\params}_l^t)$ or large enough $\|\params_l\|$, K-FAC effectively becomes a first-order optimizer. Hence, by keeping the weights small, weight decay helps K-FAC to retain its second-order properties.

\begin{wrapfigure}[11]{r}{0.42\textwidth}
	\vspace{-0.8cm}
    \centering
    \includegraphics[height=1.5in]{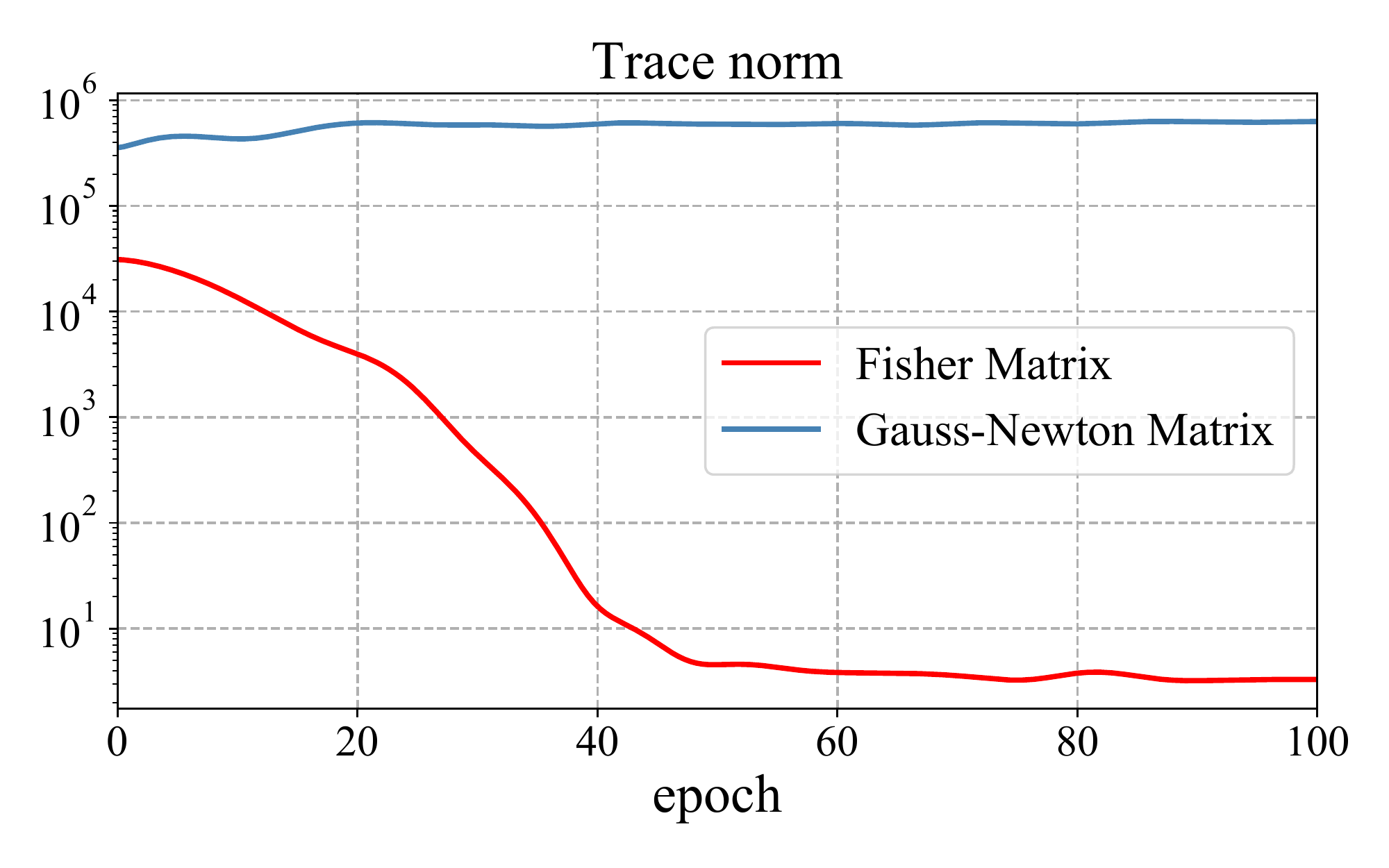}
    \vspace{-0.7cm}
    \caption{Trace norm of {\color[rgb]{0.9,0,0} Fisher matrix} and {\color[rgb]{0,0.2,0.5} Gauss-Newton matrix} of the first layer (Normalized) of ResNet32. The model was trained on CIFAR-10 with K-FAC-F and BN.}
    \label{fig:fisher-gauss}
    \vspace{-0.3cm}
\end{wrapfigure}
Most implementations of K-FAC keep the damping parameter $\lambda$ fixed throughout training. Therefore, it would be convenient if $\metric(\hat{\params}_l^t)$ and $\|\params_l\|$ do not change too much during training, so that a single value of $\lambda$ can work well throughout training. Interestingly, the norm of the GN matrix appears to be much more stable than the norm of the Fisher matrix. Figure~\ref{fig:fisher-gauss} shows the norms of the Fisher matrix $\fisher(\hat{\params}_l)$ and GN matrix $\gaussNewton(\hat{\params}_l)$ of the normalized weights for the first layer of a CIFAR-10 network throughout training. While the norm of $\fisher(\hat{\params}_l)$ decays by 4 orders of magnitude over the first 50 epochs, the norm of $\gaussNewton(\hat{\params}_l)$ increases by only a factor of 2.

The explanation for this is as follows: in a classification task with cross-entropy loss, the Fisher matrix is equivalent to the generalized GN matrix $\expect [\Jacobian_\params^\top \hessian_\ell \Jacobian_\params ]$ (see Section~\ref{sec:preliminary}).
This differs from the GN matrix $\expect [\Jacobian_\params^\top \Jacobian_\params ]$  only in that it incudes the output layer Hessian $\hessian_\ell = \diag(\pred) - \pred \pred^\top$, where $\pred$ is the vector of estimated class probabilities. 
It is easy to see that $\hessian_\ell$ goes to zero as $\pred$ collapses to one class, as is the case for tasks such as CIFAR-10 and CIFAR-100 where networks typically achieve perfect training accuracy. Hence, we would expect $\fisher$ to get much smaller over the course of training, consistent with Figure~\ref{fig:fisher-gauss}.

To summarize, when K-FAC is applied to BN networks, it can be advantageous to apply weight decay even to layers with BN, even though this appears unnecessary based on invariance considerations. The reason is that weight decay reduces the effective damping, helping K-FAC to retain its second-order properties. This effect is stronger for K-FAC-F than for K-FAC-G because the Fisher matrix shrinks dramatically over the course of training.

\section{Discussion}
Despite its long history, weight decay regularization remains poorly understood. We've identified three distinct mechanisms by which weight decay improves generalization, depending on the architecture and optimization algorithm: increasing the effective learning rate, reducing the Jacobian norm, and reducing the effective damping parameter. We would not be surprised if there remain additional mechanisms we have not found.

The dynamics of neural net training is incredibly complex, and it can be tempting to simply do what works and not look into why. But we think it is important to at least sometimes dig deeper to determine exactly why an algorithm has the effect that it does. Some of our analysis may seem mundane, or even tedious, as the interactions between different hyperparameters are not commonly seen as a topic worthy of detailed scientific study. But our experiments highlight that the dynamics of the norms of weights and curvature matrices, and their interaction with optimization hyperparameters, can have a substantial impact on generalization. We believe these effects deserve more attention, and would not be surprised if they can help explain the apparent success or failure of other neural net design choices. We also believe our results highlight the need for automatic adaptation of optimization hyperparameters, to eliminate potential experimental confounds and to allow researchers and practitioners to focus on higher level design issues.

\section{Acknowledgement}
We thank Jimmy Ba, Kevin Luk, Maxime Gazeau, and Behnam Neyshabur for helpful discussions, and Tianqi Chen and Shengyang Sun for their feedback on early drafts. GZ was funded by an MRIS Early Researcher Award.

\bibliography{iclr2019_conference}
\bibliographystyle{iclr2019_conference}

\appendix

\section{Experiments Details}
\label{app:experiment-details}

Throughout the paper, we perform experiments on image classification with three different datasets, MNIST~\citep{lecun1998gradient}, CIFAR-10 and CIFAR-100~\citep{krizhevsky2009learning}. 
For MNIST, we use simple fully-connected networks with different depth and width. For CIFAR-10 and CIFAR-100, we use VGG16~\citep{simonyan2014very} and ResNet32~\citep{he2016deep}. 
To make the network more flexible, we widen all convolutional layers in ResNet32 by a factor of 4, according to~\citet{zagoruyko2016wide}.

We investigate three different optimization methods, including Stochastic Gradient Descent (SGD), Adam~\citep{kingma2014adam} and K-FAC~\citep{martens2015optimizing}.
In K-FAC, two different curvature matrices are studied, including Fisher information matrix and Gauss-Newton matrix.

In default, batch size 128 is used unless stated otherwise. 
In SGD and Adam, we train the networks with a budge of 200 epochs and decay the learning rate by a factor of 10 every 60 epochs for batch sizes of 128 and 640, and every 80 epochs for the batch size of 2K.
Whereas we train the networks only with 100 epochs and decay the learning rate every 40 epochs in K-FAC.
Additionally, the curvature matrix is updated by running average with re-estimation every 10 iterations and the inverse operator is amortized to 100 iterations. 
For K-FAC, we use fixed damping term $1e^{-3}$ unless state otherwise.
For each algorithm, best hyperparameters (learning rate and regularization factor) are selected using grid search on held-out 5k validation set. For the large batch setting, we adopt the same strategies in \cite{hoffer2017train} for adjusting the search range of hyperparameters. 
Finally, we retrain the model with both training data and validation data.

\section{Gradient Structure in Neural Networks (Lemma~\ref{lem:nn_strucutre})}
\label{app:lemma 1}
\emph{Claim.}
	For a feed-forward neural network of depth $\depth$ with ReLU activation function and no biases, one has the following property:
	\begin{equation}
	\begin{aligned}
		\nn_\params(\inputVec) &= \grad_\inputVec \nn_\params(\inputVec)^\top \inputVec = \Jacobian_\inputVec \inputVec \\
		&= \frac{1}{L+1} \grad_\params \nn_\params(\inputVec)^\top \params = \frac{1}{\depth+1} \Jacobian_\params \params
	\end{aligned}
	\end{equation}

The key observation of Lemma~\ref{lem:nn_strucutre} is that rectified neural networks are piecewise linear up to the output $\nn_\params(\inputVec)$. And ReLU activation function satisfies the property $\sigma(\outputs) = \sigma^\prime(\outputs) \outputs$.

\begin{proof}
	For convenience, we introduce some notations here. Let $\outputs_{\depth+1}$ denotes output logits $\nn_\params(\inputVec)$, $\outputs_l$ the output $l$-th layer. Similarly, we define $\activation_l=\sigma(\outputs_l)$ and $\activation_0 = \inputVec$. By definition, it is easy to see that
	\begin{equation*}
	\label{eq:grad_structure_1}
	\begin{aligned}
		\outputs_{l+1} & = \weightMatrix_l \activation_l = \frac{\partial \outputs_{l+1}}{\partial \activation_l^\top}\activation_l \\
		&= \frac{\partial \outputs_{l+1}}{\partial \activation_l^\top} \frac{\partial \activation_l}{\partial \outputs_l^\top} \outputs_l \\
		&= \frac{\partial \outputs_{l+1}}{\partial \outputs_l^\top} \outputs_l
	\end{aligned}
	\end{equation*}
	By induction, we conclude that $\nn_\params(\inputVec) = \grad_\inputVec \nn_\params(\inputVec)^\top \inputVec = \Jacobian_\inputVec \inputVec$.
	
	On the other side, we have
	\begin{equation*}
	\label{eq:grad_structure_2}
	\begin{aligned}
		\outputs_{l+1} & = \weightMatrix_l \activation_l = \sum_{i, j} \frac{\partial \outputs_{l+1}}{\partial \weightMatrix_l^{i,j}} \weightMatrix_l^{i,j}
	\end{aligned}
	\end{equation*}
	According to \eqref{eq:grad_structure_1}, $\outputs_{\depth+1} = \frac{\partial \outputs_{\depth+1}}{\partial \outputs_{l+1}} \outputs_{l+1}^\top$, therefore we get
	\begin{equation*}
		\outputs_{\depth+1} = \sum_{i, j} \frac{\partial \outputs_{\depth+1}}{\partial \weightMatrix_l^{i,j}} \weightMatrix_l^{i,j}
	\end{equation*}
	Summing over all the layers, we conclude the following equation eventually:
	\begin{equation*}
		(\depth + 1)\nn_\params(\inputVec) = \sum_l \sum_{i, j} \frac{\partial \outputs_{\depth+1}}{\partial \weightMatrix_l^{i,j}} \weightMatrix_l^{i,j} = \grad_\params \nn_\params(\inputVec)^\top \params = \Jacobian_\params \params
	\end{equation*}
\end{proof}

\section{Proof of Lemma~\ref{lem:gauss-newton}}
\label{app:lemma 2}
\emph{Claim.}
	For a feed-forward neural network of depth $\depth$ with ReLU activation function and no biases, we observe:
	\begin{equation}\label{eq:gauss-newton}
	\|\params\|_\gaussNewton^2 = (\depth + 1)^2 \expect \left[ \|\nn_\params(\inputVec)\|^2  \right]\text{.}
	\end{equation}
	Furthermore, if we assume the network is linear, we have K-FAC Gauss-Newton norm as follows
	\begin{equation}
		\|\params\|_{\gaussNewton_{\mathrm{K-FAC}}}^2 = (L+1)\expect \left[ \|\nn_\params(\inputVec)\|^2  \right]\text{.}
	\end{equation}

\begin{proof}
	We first prove the equaility $\|\params\|_\gaussNewton^2 = (\depth + 1)^2 \expect \left[ \|\nn_\params(\inputVec)\|^2  \right]$. Using the definition of the Gauss-Newton norm, we have
	\begin{equation*}
		\|\params\|_{\gaussNewton}^2 = \expect \left[\params^\top \Jacobian_\params^\top \Jacobian_\params \params \right]
		= \expect \left[ \| \Jacobian_\params \params \|^2 \right] 
	\end{equation*}
	By Lemma \ref{lem:nn_strucutre},
	\begin{equation*}
		\Jacobian_\params \params = (L+1) \nn_{\params}(\inputVec) = (L+1) \Jacobian_\inputVec \inputVec
	\end{equation*}
	Combining above equalities, we arrive at the conclusion.
	
	For second part $\|\params\|_{\gaussNewton_{\mathrm{K-FAC}}}^2 = (L+1)\expect \left[ \|\nn_\params(\inputVec)\|^2  \right]$, we note that kronecker-product is exact under the condition that the network is linear, which means $\gaussNewton_{\mathrm{K-FAC}}$ is the diagonal block version of Gauss-Newton matrix $\gaussNewton$. 
	Therefore
	\begin{equation*}
		\|\params\|_{\gaussNewton_{\mathrm{K-FAC}}}^2 = \sum_l \expect \left[ \params_l^\top \Jacobian_{\params_l}^\top \Jacobian_{\params_l} \params_l \right]
	\end{equation*}
	According to Lemma~\ref{lem:nn_strucutre}, we have $\expect \left[ \params_l^\top \Jacobian_{\params_l}^\top \Jacobian_{\params_l} \params_l \right] = \expect \left[ \|\nn_\params(\inputVec)\|^2  \right]$, therefore we conclude that
	\begin{equation*}
		\|\params\|_{\gaussNewton_{\mathrm{K-FAC}}}^2 = (\depth + 1)\expect \left[ \|\nn_\params(\inputVec)\|^2  \right]
	\end{equation*}
\end{proof}

\section{Derivation of~\eqref{eq:ng-direction}}
\label{proof:ng-direction}
\emph{Claim.} During training, the weight direction $\hat{\params}_l^t = \params_l^t / \|\params_l^t\|_2$ is updated according to 
\begin{equation*}
	\hat{\params}_{t+1} \leftarrow \hat{\params}_{t} - \lr (\identity - \hat{\params}_t \hat{\params}_t^\top )(\metric(\hat{\params}_t) + \|\params_t\|_2^2\lambda \identity)^{-1}\grad \loss(\hat{\params}_{t}) + O(\lr^2)
\end{equation*}
\begin{proof}
	Natural gradient update is given by
	\begin{equation*}
		\params_{t+1} \leftarrow \params_{t} - \lr (\metric(\params_t) + \lambda \identity)^{-1}\grad \loss(\params_t)
	\end{equation*}
	Denote $\norm_t = \|\params_t\|_2$. Then we have
	\begin{equation*}
		\norm_{t+1}^2 = \norm_t^2 - 2\lr \norm_t^2 \hat{\params}_t^\top (\metric(\hat{\params}_t) + \lambda \norm_t^2 \identity)^{-1} \grad\loss(\hat{\params}_t) + \lr^2 \norm_t^2\|(\metric(\hat{\params}_t) + \lambda \norm_t^2 \identity)^{-1}\grad\loss(\hat{\params}_t)\|_2^2
	\end{equation*}
	and therefore
	\begin{equation*}
	\begin{aligned}
		\norm_{t+1} &= \norm_{t} \sqrt{1 - 2\lr \hat{\params}_t^\top (\metric(\hat{\params}_t) + \lambda \norm_t^2 \identity)^{-1} \grad\loss(\hat{\params}_t) + \lr^2 \|(\metric(\hat{\params}_t) + \lambda \norm_t^2 \identity)^{-1}\grad\loss(\hat{\params}_t)\|_2^2} \\
		&= \norm_t (1 - \lr \hat{\params}_t^\top (\metric(\hat{\params}_t) + \lambda \norm_t^2 \identity)^{-1} \grad\loss(\hat{\params}_t)) + O(\lr^2)
	\end{aligned}
	\end{equation*}
	Additionally, we can rewrite the natural gradient update as follows
	\begin{equation*}
		\norm_{t+1}\hat{\params}_{t+1} = \norm_t \hat{\params}_t - \lr \norm_t (\metric(\hat{\params}_t) + \lambda \norm_t^2 \identity)^{-1} \grad \loss(\hat{\params}_t)
	\end{equation*}
	And therefore,
	\begin{equation*}
	\begin{aligned}
		\hat{\params}_{t+1} &= \frac{\norm_t}{\norm_{t+1}} \left(\hat{\params}_t - \lr (\metric(\hat{\params}_t) + \lambda \norm_t^2 \identity)^{-1}\grad \loss(\hat{\params}_t) \right) \\
		&= \left(1+\lr\hat{\params}_t^\top (\metric(\hat{\params}_t) + \lambda \norm_t^2 \identity)^{-1} \grad\loss(\hat{\params}_t)\right)\left(\hat{\params}_t - \lr (\metric(\hat{\params}_t) + \lambda \norm_t^2 \identity)^{-1}\grad \loss(\hat{\params}_t) \right) + O(\lr^2) \\
		&= \hat{\params}_t - \lr (\identity - \hat{\params}_t \hat{\params}_t^\top )(\metric(\hat{\params}_t) + \|\params_t\|_2^2\lambda \identity)^{-1}\grad \loss(\hat{\params}_{t}) + O(\lr^2)
	\end{aligned}
	\end{equation*}
\end{proof}

\section{The Gradient of Gauss-Newton Norm}
\label{app:grad-gn}
For Gauss-Newton norm $\|\params\|_\gaussNewton^2 = (\depth+1)^2 \expect_\inputVec \left[ \langle f_\params(\inputVec), f_\params(\inputVec) \rangle \right]$, its gradient has the following form:
\begin{equation}
	\label{eq:grad-gn}
	\frac{\partial \|\params\|_\gaussNewton^2}{\partial \params} = 2 (\depth + 1)^2 \expect \left[ \Jacobian_\params^\top \nn_\params(\inputVec) \right]
\end{equation}
According to Lemma~\ref{lem:nn_strucutre}, we have $\nn_\params(\inputVec) = \frac{1}{\depth + 1} \Jacobian_\params \params$, therefore we can rewrite~\eqref{eq:grad-gn}
\begin{equation}
	\begin{aligned}
	\frac{\partial \|\params\|_\gaussNewton^2}{\partial \params} &= 2 (\depth + 1) \expect\left[ \Jacobian_\params^\top \Jacobian_\params \params \right] \\
	& = 2 (\depth + 1) \gaussNewton \params
	\end{aligned}	
\end{equation}
Surprisingly, the resulting gradient has the same form as the case where we take Gauss-Newton matrix as a constant of $\params$ up to a constant $(\depth + 1)$.

\section{Kronecker-factored approximate curvature (K-FAC)}
\label{app:kfac}

\citet{martens2015optimizing} proposed K-FAC for performing efficient natural gradient optimization in deep neural networks. Following on that work, K-FAC has been adopted in many tasks~\citep{wu2017scalable, zhang2017noisy} to gain optimization benefits, and was shown to be amendable to distributed computation~\citep{ba2016distributed}.

\subsection{Basic idea of K-FAC}
As shown by~\citet{luk2018coordinate}, K-FAC can be applied to general pullback metric, including Fisher metric and the Gauss-Newton metric.
For convenience, we introduce K-FAC here using the Fisher metric.

Considering $l$-th layer in the neural network whose input activations are $\activation_l \in \mathbb{R}^{n_1}$, weight $\weightMatrix_l \in \mathbb{R}^{n_1 \times n_2}$, and output $\preactivation_l \in \mathbb{R}^{n_2}$, we have $\preactivation_l = \weightMatrix_l^\top\activation_l$. Therefore, weight gradient is $\nabla_{\weightMatrix_l}\loss = \activation_l(\nabla_{\preactivation_l} \loss )^\top$. With this gradient formula, K-FAC decouples this layer's fisher matrix $\fisher_l$ using mild approximations, 
\begin{equation}
\begin{aligned}
	\fisher_l &= \expect \left[\mathrm{vec}\{\grad_{\weightMatrix_l} \loss \}\mathrm{vec}\{\grad_{\weightMatrix_l}\loss \}^\top \right] = \expect \left[\{\grad_{\preactivation_l} \loss \}\{\grad_{\preactivation_l} \loss\}^\top \otimes \activation_l\activation_l^\top \right] \\
    &\approx \expect \left[\{\grad_{\preactivation_l} \loss\}\{\grad_{\preactivation_l} \loss \}^\top \right] \otimes \expect \left[\activation_l\activation_l^\top \right] = \mathbf{S}_l \otimes \mathbf{A}_l
\end{aligned}
\end{equation}
Where $\mathbf{A}_l = \expect \left[\activation\activation^\top \right]$ and $\mathbf{S}_l = \expect \left[\{\grad_\preactivation \loss \}\{\grad_\preactivation \loss \}^\top \right]$.
The approximation above assumes independence between $\bm{a}$ and $\bm{s}$, which proves to be accurate in practice.
Further, assuming between-layer independence, the whole fisher matrix $\fisher$ can be approximated as block diagonal consisting of layerwise fisher matrices $\fisher_l$. Decoupling $\fisher_l$ into $\mathbf{A}_l$ and $\mathbf{S}_l$ not only avoids the memory issue saving $\fisher_l$, but also provides efficient natural gradient computation.
\begin{equation}
	\fisher_{l}^{-1}\mathrm{vec}\{\grad_{\weightMatrix_l}\loss\} = \mathbf{S}_l^{-1} \otimes \mathbf{A}_l^{-1} \mathrm{vec}\{\grad_{\weightMatrix_l}\loss\} = \mathrm{vec}[\mathbf{A}_l^{-1} \grad_{\weightMatrix_l}\loss \mathbf{S}_l^{-1}]
	\label{eq:ng-kfac}
\end{equation}
As shown by~\eqref{eq:ng-kfac}, computing natural gradient using K-FAC only consists of matrix transformations comparable to size of $\weightMatrix_l$, making it very efficient.

\subsection{Pseudo code of K-FAC}
\begin{algorithm}[h]
\caption{{\color{blue}K-FAC with $L_2$ regularization} and {\color{red} K-FAC with weight decay}. 
Subscript $l$ denotes layers, $\weights_l = \kvec(\mathbf{W}_l)$.
We assume zero momentum for simplicity.}
\label{alg:k-fac}
\begin{algorithmic}
\REQUIRE $\lr$: stepsize
\REQUIRE $\decay$: weight decay
\REQUIRE stats and inverse update intervals $T_{\rm stats}$ and $T_{\rm inv}$
\STATE $k \leftarrow 0$ and initialize $\{\mathbf{W}_l\}_{l=1}^{L}, \{\mathbf{S}_l\}_{l=1}^{L}, \{\mathbf{A}_l\}_{l=1}^{L}$
\WHILE{stopping criterion not met}
	\STATE $k \leftarrow k+1$
        \IF {$k \equiv 0$ (\textrm{mod} $T_{\rm stats}$)}
	    \STATE Update the factors $\{\mathbf{S}_l\}_{l=1}^{L}, \{\mathbf{A}_l\}_{l=0}^{L-1}$ with moving average
        \ENDIF
        \IF {$k \equiv 0$ (\textrm{mod} $T_{\rm inv}$)}
	    \STATE Calculate the inverses $\{[\mathbf{S}_l]^{-1}\}_{l=1}^{L}, \{[\mathbf{A}_l]^{-1}\}_{l=0}^{L-1}$ 
        \ENDIF
        \STATE $\mathbf{V}_l = \nabla_{\mathbf{W}_l}\log p(\target | \inputVec, \weights) {\color{blue}+ \decay \cdot \mathbf{W}_{l}}$
        \STATE $\mathbf{W}_l \gets \mathbf{W}_l - \left( \lr [\mathbf{A}_l]^{-1} \mathbf{V}_l [\mathbf{S}_l]^{-1} {\color{red}+ \decay \cdot \mathbf{W}_l} \right)$
\ENDWHILE
\end{algorithmic}
\end{algorithm}

\section{Additional Results}

\subsection{Large-Batch Training}
It has been shown that K-FAC scales very favorably to larger mini-batches compared to SGD, enjoying a nearly linear relationship between mini-batch size and per-iteration progress for medium-to-large sized mini-batches~\citep{martens2015optimizing, ba2016distributed}. 
However, Keskar et al. (2016) showed that large-batch methods converge to sharp minima and generalize worse. In this subsection, we measure the generalization performance of K-FAC with large batch training and analyze the effect of weight decay.

In Table~\ref{tab:classification-large-batch-sgd-kfac}, we compare K-FAC with SGD using different batch sizes. In particular, we interpolate between small-batch (BS128) and large-batch (BS2000). 
We can see that in accordance with previous works~\citep{keskar2016large,hoffer2017train} the move from a small-batch to a large-batch indeed incurs a substantial generalization gap. 
However, adding weight decay regularization to K-FAC almost close the gap on CIFAR-10 and cause much of the gap diminish on CIFAR-100.
Surprisingly, the generalization gap of SGD also disappears with well-tuned weight decay regularization.
Moreover, we observe that the training loss cannot decrease to zero if weight decay is not used, indicating weight decay may also speed up the training.

\begin{table}[!h]
\small
\caption{Classification results with different batch sizes. \textbf{WD} denotes weight decay regularization. We tune weight decay factor and learning rate using held-out validation set.}
\vspace{-0.3cm}
\label{tab:classification-large-batch-sgd-kfac}
\begin{center}
\begin{tabular}{l|c|c|c|c|c|c|c|c}
\toprule

 \multirow{2}{*}{Dataset} & \multirow{2}{*}{Network} & \multirow{2}{*}{Method} & \multicolumn{2}{c}{BS128} & \multicolumn{2}{c}{BS640} & \multicolumn{2}{c}{BS2000} \\
\cline{4-9}
& & & & WD & & WD & & WD\\
\midrule
\multirow{3}{*}{CIFAR10} & \multirow{3}{*}{VGG16}    & SGD    & 91.71 & 93.39 & 90.46 & 93.09 & 88.50 & 92.24  \\
                         &                           & K-FAC-F & 93.12 & 93.90 & 92.93 & 93.55 & 92.17 & 93.31   \\
                         &                           & K-FAC-G & 93.19 & 93.80 & 92.98 & 93.74 & 90.78 & 93.46   \\
\midrule
\multirow{3}{*}{CIFAR10} & \multirow{3}{*}{ResNet32} & SGD    & 92.95 & 95.14 & 91.68 & 94.45 & 89.70 & 94.68  \\
                         &                           & K-FAC-F & 93.80 & 95.35 & 92.30 & 94.79 & 91.15 & 94.43   \\
                         &                           & K-FAC-G & 93.44 & 95.04 & 91.80 & 94.73 & 90.02 & 94.85   \\
\midrule
\multirow{3}{*}{CIFAR100}& \multirow{3}{*}{ResNet32} & SGD    & 73.61 & 77.73 & 71.74 & 76.67 & 65.38 & 76.87 \\
                         &                           & K-FAC-F & 74.49 & 78.01 & 73.54 & 77.34 & 71.64 & 77.13  \\ 
                         &                           & K-FAC-G & 73.70 & 78.02 & 71.13 & 77.40 & 65.41 & 76.93  \\    
\bottomrule
\end{tabular}
\end{center}
\end{table}

\subsection{The curves of test accuracies}
\begin{figure}[h]
	\centering
	\vspace{-0.3cm}
    \includegraphics[width=1.0 \textwidth]{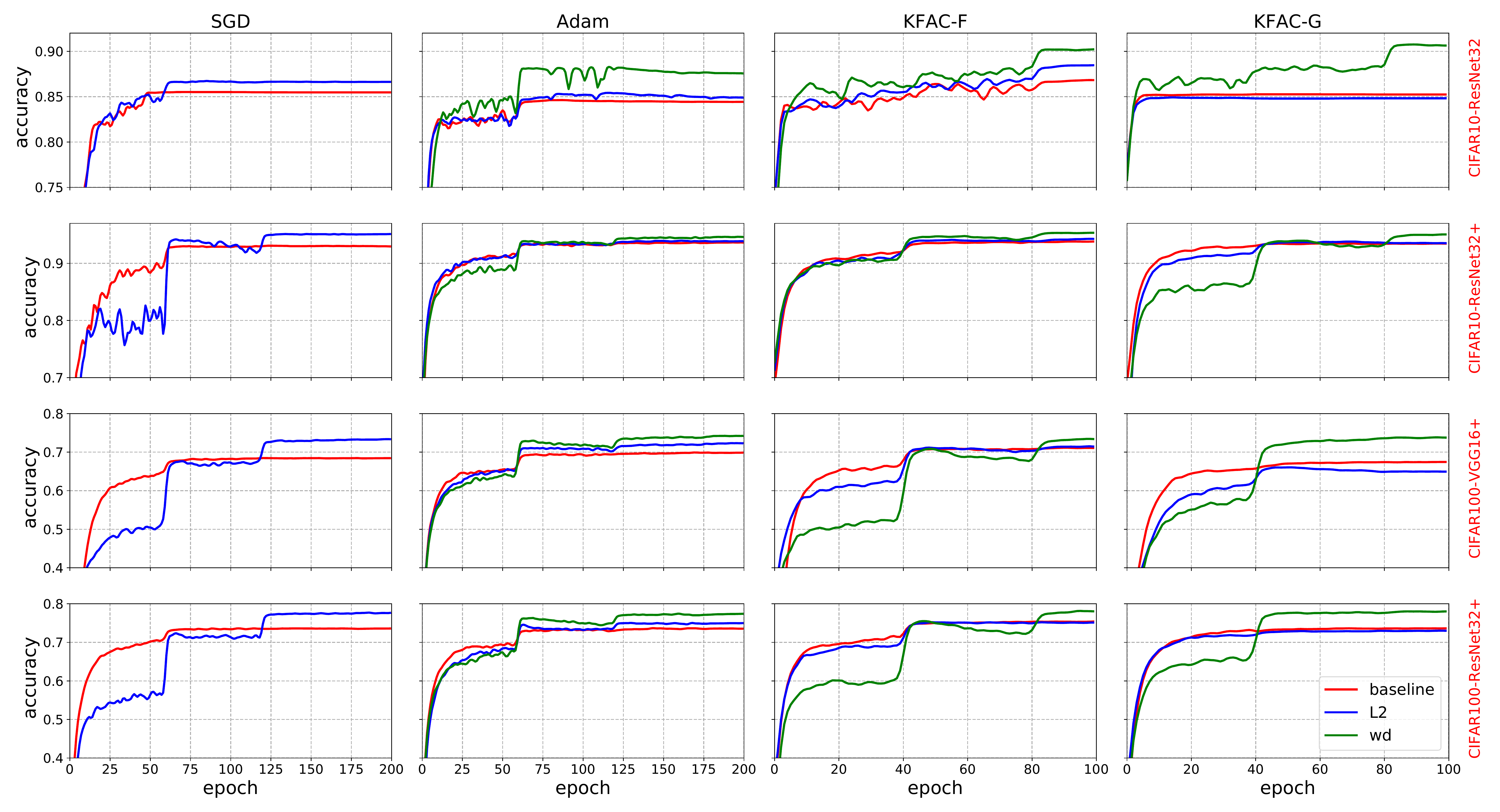}
    \vspace{-0.4cm}
    \caption{Test accuracy as a function of training epoch. We plot {\color[rgb]{0.9,0,0} baseline} vs {\color[rgb]{0,0,0.8} $L_2$ regularization} vs {\color[rgb]{0,0.5,0} weight decay regularization} on CIFAR-10 and CIFAR-100 datasets. The '+' denotes with BN and data augmentation. Note that training accuracies of all the models are $100\%$ in the end of the training. We smooth all the curves for visual clarity.}
	\label{fig:base_l2_wd_reg_curve}
\end{figure}

\subsection{Optimization performance of different optimizers}
While this paper mostly focus on generalization, we also report the convergence speed of different optimizers in deep neural networks; we report both per-epoch performance and wall-clock time performance.

We consider the task of image classification on CIFAR-10~\citep{krizhevsky2009learning} dataset. 
The models we use consist of VGG16~\citep{simonyan2014very} and ResNet32~\citep{he2016deep}.
We compare our K-FAC-G, K-FAC-F with SGD, Adam~\citep{kingma2014adam}. 

We experiment with constant learning for K-FAC-G and K-FAC-F.
For SGD and Adam, we set batch size as 128.
For K-FAC, we use batch size of 640, as suggested by~\citet{martens2015optimizing}.

In Figure~\ref{fig:convergence}, we report the training curves of different algorithms. 
Figure~\ref{fig:vgg_epoch} show that K-FAC-G yields better optimization than other baselines in training loss per epoch.
We highlight that the training loss decreases to 1e-4 within 10 epochs with K-FAC-G.
Although K-FAC based algorithms take more time for each epoch, Figure~\ref{fig:vgg_time} still shows wall-clock time improvements over the baselines.

In Figure~\ref{fig:resnet_epoch} and~\ref{fig:resnet_time}, we report similar results on the ResNet32. 
Note that we make the network wider with a widening factor of 4 according to~\citet{zagoruyko2016wide}.
K-FAC-G outperforms both K-FAC-F and other baselines in term of optimization per epoch, and compute time.

\begin{figure}[h]
    \centering
    \begin{subfigure}{0.45\textwidth}
        \centering
        \includegraphics[height=1.4in]{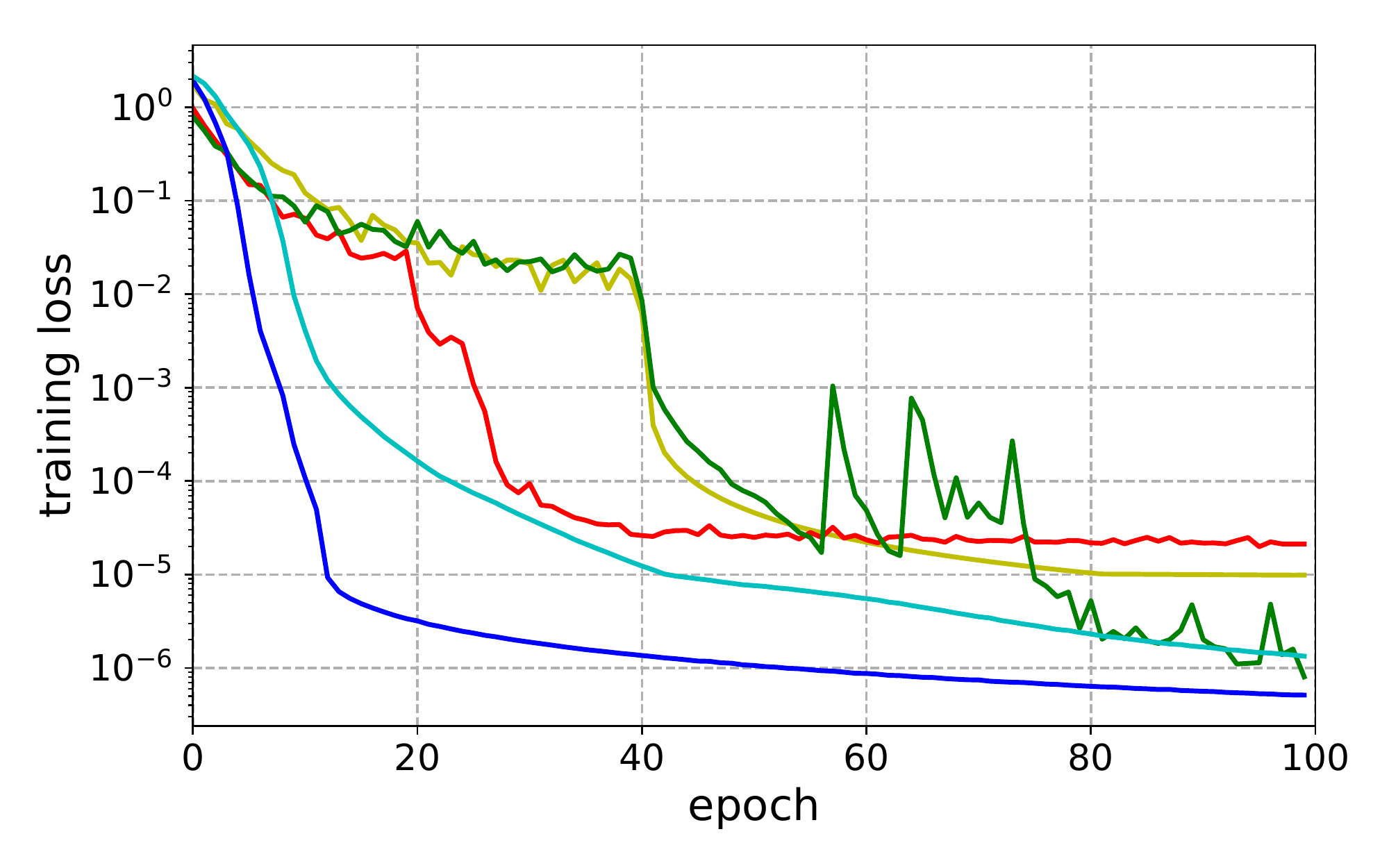}
        \caption{Training loss (VGG16)}
        \label{fig:vgg_epoch}
    \end{subfigure}%
    \begin{subfigure}{0.45\textwidth}
        \centering
        \includegraphics[height=1.4in]{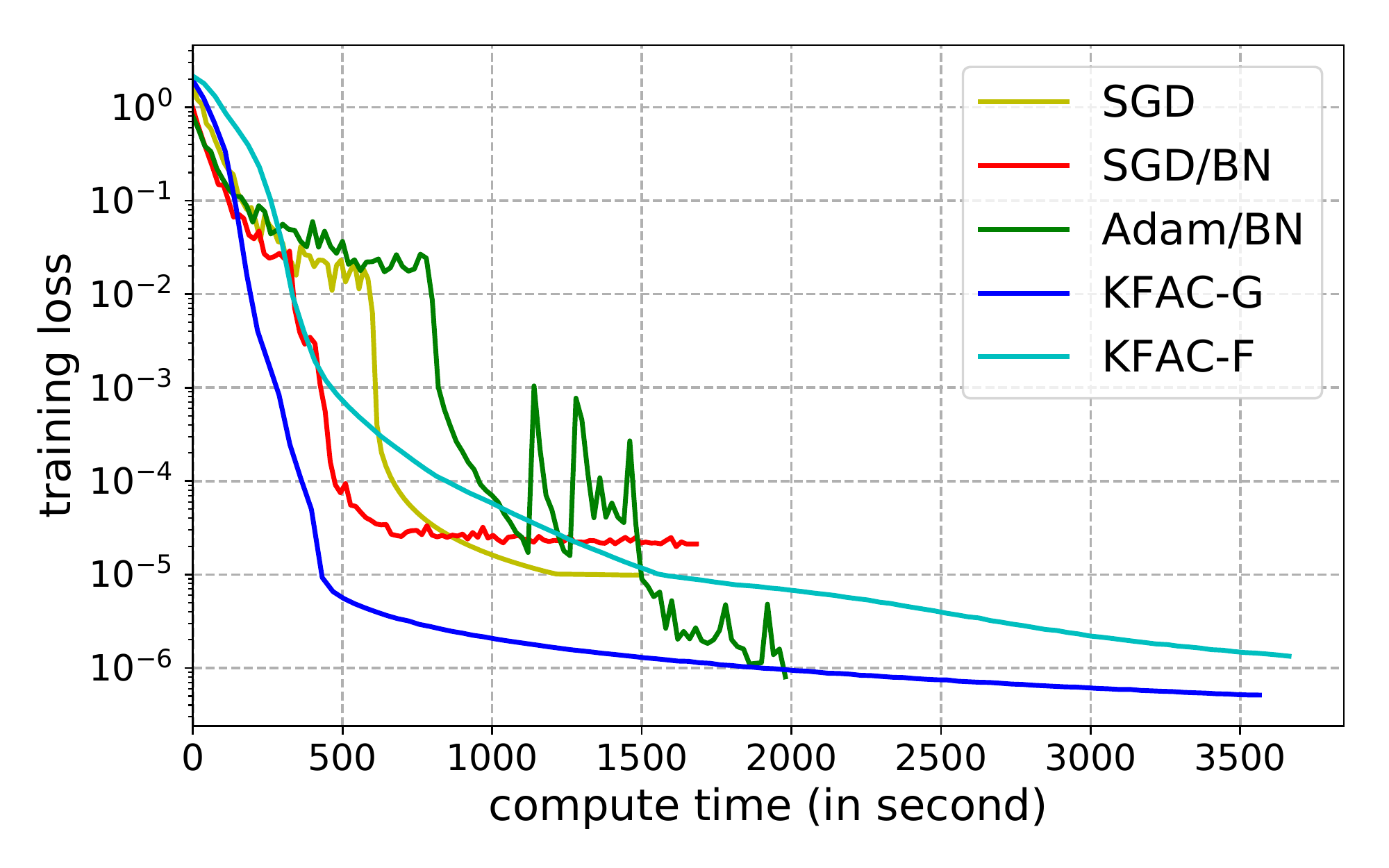}
        \caption{Wall-Clock Time (VGG16)}
        \label{fig:vgg_time}
    \end{subfigure}%
    \\
    \begin{subfigure}{0.45\textwidth}
        \centering
        \includegraphics[height=1.4in]{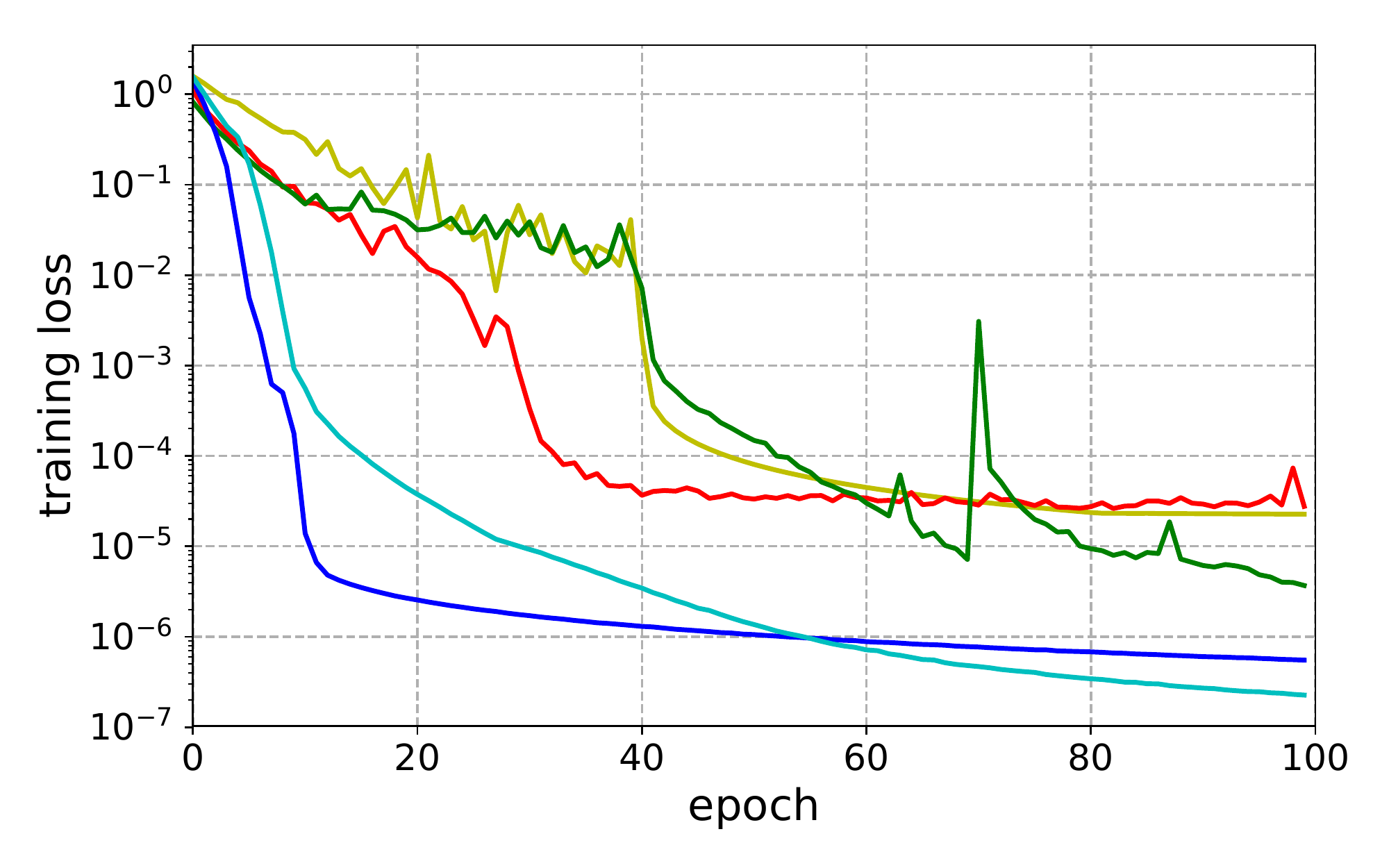}
        \caption{Training loss (ResNet32)}
        \label{fig:resnet_epoch}
    \end{subfigure}
    \begin{subfigure}{0.45\textwidth}
        \centering
        \includegraphics[height=1.4in]{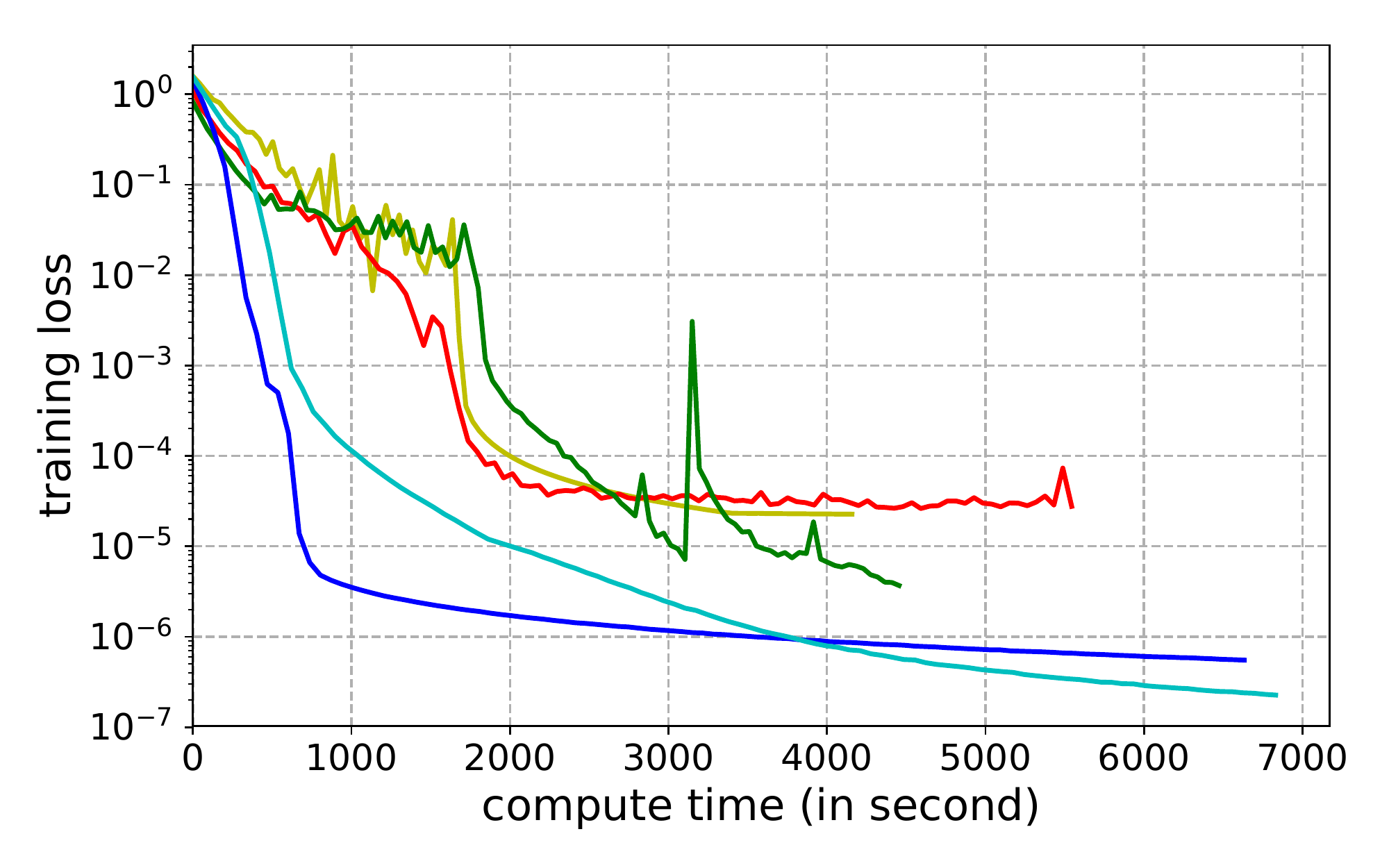}
        \caption{Wall-Clock Time (ResNet32)}
        \label{fig:resnet_time}
    \end{subfigure}%
    \caption{CIFAR-10 image classification task.}
    \label{fig:convergence}
\end{figure}

\end{document}